\documentclass[twoside,11pt]{article}

\usepackage[preprint]{jmlr2e}
\usepackage{jmlr2e}

\usepackage{amsmath}
\usepackage{bm}
\usepackage{mathrsfs}
\usepackage[capitalize]{cleveref}

\crefformat{equation}{#2(#1)#3}

\DeclareFontFamily{U}{mathx}{\hyphenchar\font45}
\DeclareFontShape{U}{mathx}{m}{n}{
      <5> <6> <7> <8> <9> <10> gen * mathx
      <10.95> mathx10 <12> <14.4> <17.28> <20.74> <24.88> mathx12
      }{}
\DeclareSymbolFont{mathx}{U}{mathx}{m}{n}
\DeclareMathSymbol{\bigtimes}{1}{mathx}{"A1}

\NewDocumentCommand{\deffun}{}{\mathbin{:}}
\newcommand{\defeq}{\mathbin{:=}}

\NewDocumentCommand{\ld}{}{\operatorname{ld}}
\NewDocumentCommand{\card}{m}{\ensuremath{\operatorname{card}\left(#1\right)}}
\NewDocumentCommand{\supp}{m}{\operatorname{supp}\{#1\}}
\NewDocumentCommand{\closure}{m}{\ensuremath{\overline{#1}}}
\NewDocumentCommand{\dee}{}{\mathop{\mathrm{d}\!}}

\NewDocumentCommand{\rr}{o}{\ensuremath{\mathbb{R}\IfValueT{#1}{^{#1}}}}
\NewDocumentCommand{\nn}{o}{\ensuremath{\mathbb{N}\IfValueT{#1}{^{#1}}}}

\newcommand{\fSpace}[1]{\ensuremath{\mathcal{#1}}}
\newcommand{\set}[1]{\ensuremath{\mathcal{#1}}}
\newcommand{\mt}[1]{\ensuremath{\bm{#1}}}
\newcommand{\vc}[1]{\ensuremath{\bm{#1}}}

\NewDocumentCommand{\codecClass}{m m o}{\mathscr{C}_{#1,#2}\IfValueT{#3}{^{#3}}}
\NewDocumentCommand{\jointClass}{o}{\mathscr{C}_{\mathrm{joint}}\IfValueT{#1}{^{#1}}}
\NewDocumentCommand{\sepClass}{o}{\mathscr{C}_{\mathrm{sep}}\IfValueT{#1}{^{#1}}}
\NewDocumentCommand{\codec}{d<> o}{\mathcal{C}\IfValueT{#1}{_{#1}}\IfValueT{#2}{^{#2}}}
\NewDocumentCommand{\encoder}{d<> o}{\ensuremath{\delta_{E\IfValueT{#1}{,#1}}\IfValueT{#2}{^{#2}}}}
\NewDocumentCommand{\decoder}{d<> o}{\ensuremath{\delta_{D\IfValueT{#1}{,#1}}\IfValueT{#2}{^{#2}}}}
\NewDocumentCommand{\distortion}{m m o}{\ensuremath{\mathfrak{D}\IfValueT{#3}{_{#3}}\left(#1,#2\right)}}

\NewDocumentCommand{\dyadicU}{m m}{\set{U}_{#2}^{(#1)}}
\NewDocumentCommand{\dyadicV}{m m}{\set{V}_{#2}^{(#1)}}

\NewDocumentCommand{\wavelet}{o o d()}{\ensuremath{\varphi\IfValueT{#2}{_{#2}}\IfValueT{#1}{^{(#1)}}\IfValueT{#3}{\!\left(#3\right)}}}
\NewDocumentCommand{\CoeffSpace}{o}{\mathfrak{B}\IfValueT{#1}{^{(#1)}}}
\NewDocumentCommand{\coeff}{s o o}{
    \IfValueTF{#3}
        {\IfBooleanTF{#1}{\bar{b}}{b}_{#3}}
        {\bm{\IfBooleanTF{#1}{\bar{b}}{b}}}%
    \IfValueT{#2}{^{(#2)}}
}
\NewDocumentCommand{\coeffC}{s o o}{
    \IfValueTF{#3}
        {\IfBooleanTF{#1}{\bar{c}}{c}_{#3}}
        {\bm{\IfBooleanTF{#1}{\bar{c}}{c}}}%
    \IfValueT{#2}{^{(#2)}}
}
\NewDocumentCommand{\RadHaarSpace}{o}{\IfValueTF{#1}{\mathcal{G}_{#1}}{\mathcal{G}}}

\NewDocumentCommand{\saw}{m d()}{\ensuremath{\varphi^{(#1)}\IfValueT{#2}{\!\left(#2\right)}}}
\NewDocumentCommand{\walshMat}{m}{\ensuremath{\mt{W}^{(#1)}}}
\NewDocumentCommand{\walshVec}{m m o}{\ensuremath{\IfValueTF{#3}{\upsilon}{\vc{\upsilon}}^{(#1)}_{#2\IfValueT{#3}{,#3}}}}
\NewDocumentCommand{\walsh}{s m m d()}{\ensuremath{\IfBooleanTF{#1}{\bar{\omega}}{\omega}_{#3}^{(#2)}\IfValueT{#4}{\!\left(#4\right)}}}

\NewDocumentCommand{\argmin}{o}{
    \operatornamewithlimits{arg min}\IfValueT{#1}{_{#1}}
}

\NewDocumentCommand{\norm}{s m o}{
    \ensuremath{
        \IfBooleanTF{#1}{\|{#2}\|}{\left\|{#2}\right\|}
        \IfValueT{#3}{_{#3}}
    }
}
\NewDocumentCommand{\abs}{s m o}{
    \ensuremath{
        \IfBooleanTF{#1}{|{#2}|}{\left|{#2}\right|}
        \IfValueT{#3}{_{#3}}
    }
}
\NewDocumentCommand{\innerProd}{s m m o}{
    \ensuremath{
        \IfBooleanTF{#1}{\langle{#2},{#3}\rangle}{\left\langle{#2},{#3}\right\rangle}
        \IfValueT{#4}{_{#4}}
    }
}

\usepackage{lastpage}
\jmlrheading{--}{2026}{1-\pageref{LastPage}}{6/26; Revised -/--}{-/--}{00-0000}{Thomas Dittrich, Oliver Potocki, and Philipp Grohs}

\ShortHeadings{The Information-Theoretic Benefit of Shared Representations}{Dittrich, Potocki, and Grohs}
\firstpageno{1}

\begin{document}
\title{
The Information-Theoretic Benefit of Shared Representations under Orthogonality Constraints
}

\author{\name Thomas Dittrich \email thomas.dittrich@oeaw.ac.at \\
       \addr Johann Radon Institute of Computational and Applied Mathematics, Austrian Academy of Sciences
       \AND
       \name Oliver Potocki \email oliver.potocki@univie.ac.at \\
       \addr Faculty of Mathematics, University of Vienna
       \AND
       \name Philipp Grohs \email philipp.grohs@univie.ac.at \\
       \addr Johann Radon Institute of Computational and Applied Mathematics, Austrian Academy of Sciences\\
       \textnormal{and}\\
       Faculty of Mathematics, University of Vienna
       }

\editor{Editor}

\maketitle

\begin{abstract}

Modern deep learning architectures are increasingly multi-task and multi-modal, using a pretrained foundation model combined with task-specific, fine-tuned models.
This mirrors the classical paradigm of multitask learning.
Given a set of distinct regression problems, it is possible to solve each problem separately using scalar-valued models or solve them jointly with a single vector-valued model.
Empirically, exploiting similarity across the different problems can significantly improve overall performance. While the generalization and sample complexity properties of multitask learning have been widely studied, the rigorous investigation of the parametric complexity of joint approximation in comparison to separate approximation remains less well understood. The question is particularly relevant in modern deep learning, where models are increasingly required to satisfy structural constraints such as equivariance, conservation laws, or orthogonality. 

We prove lower and upper bounds on the description-length for separate and joint approximation classes, respectively, in the uniform norm. The key ingredient is a constructive rate separation. We build a class of orthogonal functions by composing a shared hard feature, realized by a Rademacher-Haar wavelet series, with Sawtooth-Walsh readouts that enforce orthogonality of output coordinates.
The dyadic tree structure of the Rademacher-Haar wavelet concentrates the approximation hardness in the common feature component, while the readouts act as task-specific heads. Using an information-theoretic framework, we obtain a sharp gap between the optimal approximation rates achievable by joint and separate coding.
Finally, we realize this separation in a neural network model using Heaviside activations via reduction to triangle-wave approximation.
Our results show that even under an orthogonality constraint joint approximation requires strictly fewer bits in compositional architectures, provided the tasks share a latent hard feature.
This provides theoretical insight into the description-length-efficiency of compositional multi-output architectures and clarifies how neural networks can retain expressivity under geometric constraints. 

\end{abstract}

\begin{keywords}
  Multitask Learning, Approximation Theory, Haar Wavelet, Neural Networks, Orthogonality Constraints
\end{keywords}

\section{Introduction}
\label{sec:intro}

Modern neural networks are increasingly used as shared-representation models, for example, in foundational model pipelines \citep{Bommasani22OpportunitiesRisksFoundation}, a large hidden representation is trained once and then used through task-specific heads and readout maps.
This principle appears in language models, where broad pretraining is followed by task-specific fine tuning \citep[see][]{Brown20LanguageModelsArea,Touvron23Llama2Open},
and in scientific machine learning, where equivariant message passing architectures for machine-learned interatomic potentials
construct shared atomic features from which energies, forces, and other quantities are derived \citep[see for example the MACE model of][]{Batatia22MACEHigherOrder}.
In particular, recent models utilize multi-head foundation models, where a single shared backbone maps the input to a latent representation, and several task specific heads turn the same representation into different task-specific readouts  \citep[see][]{Batatia25FoundationModelAtomistic,Batatia25CrossLearningElectronic}.
In scientific machine learning, constraints play an important role in ensuring the adherence to physical laws, giving rise to architectures that guarantee symmetry, equivariance, or orthogonality \citep{Cohen16GroupEquivariantConvolutional,Bronstein21GeometricDeepLearning}.
This then leads to the question of what the effect of such constraints is on the expressivity of multi-head models.

A related phenomenon occurs in operator learning, where architectures such as the DeepONet \citep{Lu21LearningNonlinearOperators}, the PCA-Net \citep{Bhattacharya21ModelReductionNeural}, or the Fourier neural operator \citep{Li21FourierNeuralOperator} are designed to map between function spaces.
These practical deep operator learning models use finite-dimensional shared output representations or implicit finite discretizations, for example, through shared basis or trunk functions.
This naturally leads to the classical finite-mode view point of principal component analysis (PCA) \citep{Jolliffe02PrincipalComponentAnalysis} and functional principal component analysis (fPCA) \citep{Ramsay05FunctionalDataAnalysis} as well as model order reduction.
The core of principal component analysis is built on the idea of selecting a finite dimensional subspace based on a variance criterion,
where orthogonality of the basis reduces redundancy of directions and stabilizes coefficient representations.
Orthogonality poses a geometric constraint, however, and makes no claim about statistical independence.
It does not preclude modes from being generated by a shared underlying structure.

Given the importance of these methods,
it becomes a straightforward question to ask about the information-theoretic effect of constrained joint approximation,
in our case encoded via an orthogonal finite bit representation.
We show that even for potentially nonlinear readouts an underlying shared feature space assumption does not diminish the effectiveness of sharing information between coordinates,
thereby also further shedding light on the effectiveness of deep learning.

\subsection{Contributions}
Our main focus for this work is on $M$-tuple of target functions
\begin{align}
F = (F_1, \ldots, F_M) \in L^\infty([0,1];\mathbb{R}^M)
\end{align}
and two general methods to approximate such functions.
Namely, we consider separate approximation of each component individually and joint approximation of all components of the vector-valued map $F$ combined.
In the first case, information cannot be shared between different coordinates,
while in the second case such a feature can be stored once and then reused in each mode.

In our analysis of approximation errors, we work in the uniform norm, rather than in an averaged $L^p$-metric.
The main reason for this choice is that in scientific machine learning and operator learning the learned surrogate represents a function on a physical domain and may be queried at arbitrary spatial or temporal locations.
Using the $L^\infty$-metric makes the error guarantee independent of a prescribed sampling measure or computational mesh,
which is particularly important when boundary conditions or structural constraints should not be discounted by averaging. The uniform norm is also the regime in which the theory-to-practice gap is most severe. For any sampling-based algorithm, the convergence rate in the $L^p$ norm is bounded by the Monte-Carlo rate of order $m^{-1/p}$ \citep{Grohs25TheorytoPracticeGapNeural}, so that in the uniform norm no nontrivial rate remains. Provable approximation rates can then no longer be realized from point samples \citep{Grohs24ProofTheorytoPracticeGap}. High accuracy comes at the cost of a sample size that grows exponentially in the architecture, for ReLU as well as $\tanh$ networks \citep{berner2023learning,Grohs26LimitationsLearningTanh}. In this regime, the only remaining measure of cost is the description complexity of the representation itself. We therefore compare joint and separate approximation in terms of this complexity. 
A separation for an orthogonal target is especially informative, because it shows that the advantage of joint approximation is not an artifact of distributional averaging,
mesh-dependent evaluation or linearly redundant output coordinates,
but solely caused by the reduction in information-complexity produced by encoding the shared latent feature once.

In view of pipelines for foundational models, the question we aim to answer is whether there exists a class of compositional orthogonal $M$-tuples of the form
\begin{align}
F_g = (f_1 \circ g, \ldots, f_M \circ g),
\end{align}
which are generated by one scalar hidden representation $g$ and orthogonal heads $f_i$,
such that each coordinate $f_i \circ g$ is individually hard to approximate,
for $i\neq j$ the coordinates $f_i \circ g$ are pairwise orthogonal,
and the vector $F_g$ can still be more efficiently approximated by a joint method.
This is a non-obvious existence question as the multiple requirements are competing. Orthogonality demands non-redundant coordinates, while a shared feature is redundancy by design. Indeed, the most direct way to enforce it, coordinates with disjoint supports, eliminates all shared information and admits no gap.
A similar tension arises for other output constraints such as equivariance or conservation laws, since any such constraint couples the output coordinates.
Among these orthogonality is the natural test case as the constraint is exactly verifiable by inner products.
We resolve it by moving the orthogonality constraint into the heads; the readouts are non-linear, not linear projections, and they decorrelate the outputs while hardness remains in the shared feature. 

We realize this with a function class combining Rademacher-Haar wavelet series with Sawtooth-Walsh readouts.
The Rademacher-Haar series was chosen because it provides a transparently hard target for uniform approximation as its coefficients form a dyadic tree of independent signs.
Our main result from \cref{thm:approximation_gap} can be informally stated as follows:
\begin{theorem}[Informal]
    Let $M\in\nn$ and $L\defeq\lceil\ld(M)\rceil$. Then there exists a class 
    \begin{align*}
        \mathcal{F} = \{(f_1 \circ g, \ldots, f_M \circ g)\colon g \in \set{G}\} \subset L^\infty([0,1];\mathbb{R}^M),
    \end{align*}
    where $\set{G}$ is a class of Rademacher-Haar wavelets and $f_i$ are Walsh-sawtooth functions of order $L$, such that each coordinate is orthogonal.
    Furthermore, with $\delta^{(S)}$ being an optimal separate $n$-bit codec for $\mathcal{F}$ and $\delta^{(J)}$ being the optimal joint $n$-bit codec,
    the gap between separate and joint codecs is
    \begin{align*}
        \frac{
            \left\|F-\delta^{(S)}_D\circ \delta_E^{(S)}(F)\right\|_{\infty}
        }{
            \left\|F-\delta_D^{(J)}\circ \delta_E^{(J)}(F)\right\|_{\infty}
        }
        \geq
        \frac{M}{4}\cdot \frac{1+\frac1n}{1+\frac{2M^2}{n}}
    \end{align*}
    for every $F\in\set{F}$ if $n=M(2^K-2^L)$ for some $K\in\nn$.
    In case that $n\neq M(2^K-2^L)$ for all $K\in\nn$, then there exists an $F\in\set{F}$ for which the gap holds.
    In particular, this joint codec dominates the best separate codec with asymptotic improvement of order $M / 4$.
\end{theorem}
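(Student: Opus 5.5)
We prove the informal theorem by constructing the class explicitly, computing the optimal joint and separate uniform distortions for $n$-bit codecs, and comparing the two.

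\textbf{Construction and orthogonality.} Let $\set{G}$ be the set of Rademacher-Haar series $g=\sum_{k\ge 0}\sum_{j<2^k} 2^{-k}\epsilon_{k,j}\,h_{k,j}$, with independent signs $\epsilon_{k,j}\in\{\pm1\}$. Here $h_{k,j}$ is the sign-valued Haar function on the dyadic interval $[j2^{-k},(j+1)2^{-k})$. Such a $g$ walks along a dyadic tree, and on level $k$ it is pinned down to within $2^{-k}$. Let the heads be $f_i=\walsh{L}{i}\circ\saw{L}$, Walsh functions of order $L$ composed with the sawtooth $\saw{L}$. These are piecewise constant at scale $2^{-L}$ and pairwise orthogonal.

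Orthogonality of $f_i\circ g$ and $f_j\circ g$ on $[0,1]$ follows from the structure of $g$. We check that the pushforward of Lebesgue measure under $g$, restricted to each dyadic cell of level $L$, is uniform over the $2^L$ Walsh cells. This holds because the first $L$ levels of signs permute the subintervals. The integral then reduces to $\innerProd{\walsh{L}{i}}{\walsh{L}{j}}=0$.

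\textbf{Joint upper bound.} A joint codec spends its $n$ bits on the shared signs $\epsilon_{k,j}$ for $k<K$, about $2^K$ bits. It reconstructs $g$ up to $\approx 2^{-K}$ and applies all $M$ heads. The Lipschitz constant of $f_i$ in the relevant scale is $2^L$, and it is attained only near jumps of the Walsh functions. The resulting error is of order $2^{L-K}$, with the constants tracked exactly.

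\textbf{Separate lower bound.} Each coordinate $f_i\circ g$ gets on average $n/M$ bits. We run a counting or adversary argument over the sign tree. With $n/M$ bits the decoder cannot distinguish more than $2^{n/M}$ sign patterns. We exhibit two patterns sharing a codeword whose coordinate functions differ by $\ge 2\cdot 2^{L-K_S}$ at some point, where $2^{K_S}\approx n/M$. The difference at that point is a jump of the Walsh function after composing with $g$. Balancing the bit allocation across the $M$ coordinates yields an error factor $\approx M$ larger than the joint error. The correction terms $\frac{1+1/n}{1+2M^2/n}$ come from the overhead $2^L\le 2M$ levels that are fixed by the heads.

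\textbf{Assembly and main obstacle.} When $n=M(2^K-2^L)$, bit allocations align exactly with complete tree levels, so the bounds hold uniformly for every $F$. Otherwise rounding leaves a partial level, and the lower bound holds only for the worst-case $F$.

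The hardest step is the separate lower bound. A separate codec could exploit the Walsh structure to ignore sign bits that do not affect $f_i\circ g$. I would first show that every tree sign at depth $\ge L$ does influence each coordinate through a Walsh jump. Then I would use a pigeonhole argument on pairs of sign trees that agree on all but one deep sign.
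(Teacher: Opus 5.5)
\textbf{Your heads are misdefined, and as stated they remove all hardness.} You take $f_i=\omega^{(L)}_i\circ\varphi^{(L)}$ and call them piecewise constant at scale $2^{-L}$. The paper's heads are the \emph{product} $f_i=\varphi^{(L)}\cdot\omega^{(L)}_i$, and this difference decides the result. Every $g\in\mathcal{G}$ maps each level-$k$ domain cell $\mathcal{U}^{(k)}_m$ into a single closed level-$k$ cell of $[-1,1]$, centred at the constant value of $g_{k-1}$ there, and hits the endpoints only on a null set. So for any head that is piecewise constant with dyadic jumps, $f_i\circ g$ is a.e.\ a function of finitely many signs. For jumps at scale $2^{-L}$ it depends only on the $2^L-1$ signs of $g_{L-1}$; this is the first claim of \cref{thm:walsh_modes}. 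Then $\mathcal{F}$ is finite up to null sets, both distortions vanish at a bounded bit budget, and there is no gap. The hardness must come from the sawtooth factor, through the identity $f_i\circ g=2^L\sqrt3\,(g-g_{L-1})\cdot(\omega_i\circ g_{L-1})$ a.e.\ of \cref{thm:walsh_modes}. This identity puts the entire tail of $g$ into every coordinate, and your proposal never reaches it. The same cell-confinement fact is what your joint step needs. Since $f_i$ jumps at every cell boundary, no global Lipschitz argument is available. The step works only because $g(x)$ and $g_{L_1}(x)$, $L_1\ge L-1$, lie in the same closed cell, on whose interior $f_i$ is affine with slope $\pm2^L\sqrt3$. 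Encoding $g_{L_1}$ with $L_1=\lfloor\log_2(N+1)\rfloor-1$ then gives error $2^L\sqrt3\,2^{-L_1-1}\le 2^{L+1}\sqrt3/(N+1)$, as in \cref{thm:wavelet_joint}. With product heads, your orthogonality sketch goes through once you add that $\int_{\mathcal{U}^{(L)}_m}(\varphi^{(L)}\circ g)^2=2^{-L}$ for every cell and every sign pattern.

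\textbf{The separate lower bound uses the wrong mechanism and has no valid packing.} With the correct heads, $g$ never crosses a Walsh jump inside a domain cell, so Walsh jumps cannot be what separates sign patterns. Pigeonholing also does not let you choose the colliding pair: two patterns sharing a codeword may differ anywhere. Separation of neighbours differing in one deep sign is therefore useless; you need separation of \emph{all} pairs in a family of more than $2^B$ patterns. You are missing two ingredients that the paper supplies. First, the sign-flip reparametrization $c_{l,j}=b_{l,j}\,\omega_i(g_{L-1}(j2^{-l};\mathbf{b}))$ for $l\ge L$. For fixed coarse signs this is a bijection of the tail signs. It shows that for every $i$ the marginal class $\{f_i\circ g\}$ equals $2^L\sqrt3\,\{g(\cdot;\mathbf{c})-g_{L-1}(\cdot;\mathbf{c})\}$. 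This settles your worry that a separate codec might exploit the Walsh structure: each coordinate is a full scaled copy of the tail class. Second, the packing. Take all $2^{2^{L_1+1}-2^L}$ sign patterns on levels $L,\dots,L_1$ and extend each by an all-$(+1)$ tail and an all-$(-1)$ tail. By \cref{thm:wavelet_distance_lower}, a consequence of \cref{thm:truncation_error}, series whose signs first differ at level $\ell$ are at sup-distance $\ge 2^{-\ell}$. The two extensions of the same pattern differ by $2^{-L_1}$ at $x=0$. Hence the resulting $2\cdot 2^{2^{L_1+1}-2^L}$ functions are pairwise $\ge 2^L\sqrt3\,2^{-L_1}$ apart. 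The doubling is exactly what forces a collision already at $B=2^{L_1+1}-2^L$ bits. This yields $\sqrt3\,2^{L-\lceil\log_2(B+2^L)\rceil}\ge 2^{L-1}\sqrt3/(B+2^L)$ per coordinate (\cref{thm:wavelet_separated}). Dividing by the joint bound with $B=N/M$ and using $2^L\le 2M$ gives exactly $\frac{M}{4}\cdot\frac{1+1/N}{1+2M^2/N}$. Finally, the ``for every $F$'' case at $N/M=2^K-2^L$ rests on \cref{thm:truncation_error}: every $g$ has the same truncation error. That is the justification missing behind your remark about complete tree levels.
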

The order of this gap reflects that a separate code must encode the shared feature anew for each of the $M$ coordinates, while a joint code encodes it only once.

To provide this information-theoretic treatment of orthogonal multitask targets, we consider the class of joint codecs simply as the encoder/decoder pairs acting on $L^{\infty}([0,1];\rr[M])$ directly.
We then introduce the class of separate codecs as a class of $M$ independent encoder/decoder pairs acting on $L^\infty([0,1];\rr)$ and show the existence of a class of orthogonal feature maps generated from one scalar Rademacher-Haar feature through fixed Sawtooth-Walsh readouts.
A compositional approach ensures that the full approximation hardness of the class is concentrated in the common feature.
This avoids the degenerate case where orthogonality carries no information-theoretic content that would arise from constructing this constraint via disjoint domains or a product approach.

We furthermore provide a precise treatment of the underlying Rademacher-Haar class in uniform norm.
First, we show that the truncation of the series represents an optimal finite-bit discretization at the respective bit-level.
Extending this to arbitrary bit budgets yields explicit upper and lower rate bounds.
This is superior to an approach based solely on metric entropy estimates, as we get exact constants that allow us to give a sharp separation between separate and joint encodings of this class.

Finally we return from the information theory based construction to neural network theory.
Here, Heaviside activations are the natural network model for realizing the Rademacher-Haar wavelet series in uniform norm due to the piecewise constant nature of the truncated series.
We then realize the composition with Sawtooth-Walsh functions via an approximation by triangle-waves and furthermore utilizing the construction of \cite{Yarotsky17ErrorBoundsApproximations} in order to realize pointwise multiplications. We construct an explicit Heaviside architecture with a shared trunk and $M$ fixed readouts. We provide a proof of the separation of one joint Heaviside architecture compared to $M$ separate architectures in \cref{thm:network_gap}.
This will show that the joint-separate approximation gap is not an artifact of a peculiar finite-bit discretization of one codec construction, but directly holds for an explicit neural network architecture.
Practically, this is the same principle that underlies shared trunks and foundation models, the expensive feature extractor is paid for once, while the heads implement task-specific readouts.

From the viewpoint of machine learning, these results give a rigorous treatment of a familiar but often informal heuristic.
It is widely observed that joint regression algorithms on vector-valued problems yield superior performance, a phenomenon usually attributed to tasks that share information.
Said phenomenon has gained particular prominence with the rise of deep learning, where internal latent representations are integral to the ability of neural networks to approximate complex targets.
To the best of our knowledge, our work provides the first rigorous mathematical treatment of this phenomenon from an approximation-theoretic perspective.

\subsection{Related work}
The present work is situated at the intersection of nonlinear approximation theory, multitask learning, random wavelet constructions, and orthogonal subspace methods. Our contribution combines these viewpoints in an information-theoretic approximation model. In particular, we construct an orthogonal vector-valued function class whose coordinates are individually hard to approximate, while a joint codec can still exploit a shared latent feature and thereby achieve a strictly better approximation rate.

\cite{Bolcskei19OptimalApproximationSparsely} and \cite{elbrachter2021deep} derive lower bounds on memory requirements and connectivity for approximating arbitrary function classes in $L^2$, uniformly over the class, and show that sparsely connected neural networks can optimally approximate broad classes of representation systems, including wavelets and central systems from harmonic analysis. With our Rademacher-Haar wavelet series we show that the representation system itself might contain a shared source of complexity depending on the encoding. Similarly, \cite{Petersen18OptimalApproximationPiecewise} and \cite{Yarotsky17ErrorBoundsApproximations} study the approximation rates of deep ReLU networks for piecewise and smooth Sobolev-smooth functions, respectively. In particular \cite{Petersen18OptimalApproximationPiecewise} show that for functions factorizing through a lower-dimensional feature map, the approximation rate is dominated by the dimension of the feature space, not the dimension of the ambient space. Their results, however, concern the expressivity of a function class, whereas our work in contrast is based on an information-theoretic codec framework to isolate the finite-bit gain of joint approximation.

\cite{Grohs23PhaseTransitionsRate} study rate-distortion theory in connection with approximation-theoretic phase transitions, compression rates, and generic sharpness of approximation results. Their work provides a natural theoretical background for finite-bit discretizations, minimax approximation rates, and optimal coding. However, their work focused on the general approximation complexity of a given function class for general representation systems. In contrast, our work differs in that, in addition to deriving a rate for a specific class of functions, we also study the effect of imposing a restriction on the structure of the encoder.

Multitask learning, as formulated by \cite{Caruana97MultitaskLearning}, aims to learn several tasks in parallel to utilize a shared latent representation so that different tasks can share information. This naturally leads to the formulation of multitask learning as the approximation of a structured class sharing a latent feature, thus motivating a compositional model of multitask learning where each singular task is precisely a different readout of the same feature.

The statistical theory of learning shared representations across tasks goes back to \cite{Baxter00ModelInductiveBias}, who formalized multitask learning as learning an inductive bias from a family of related tasks. \cite{Maurer16BenefitMultitaskRepresentation} study the benefit of multitask representation learning from a statistical learning perspective and show that shared representations improve learning when all tasks are generated by a common feature and the number of tasks is large. More recent work establishes provable sample complexity gains of learning a low-dimensional shared 
representation \citep{Tripuraneni21ProvableMetaLearningLinear, Du21FewShotLearningLearning}. \cite{Ruder17OverviewMultiTaskLearning} provides an overview of multitask learning in deep learning and discusses common mechanisms by which neural architectures share information such as hard and soft parameter sharing and representation-based transfer over tasks. These works already elucidate the core principle of multitask learning, in that shared latent features improve multiple learning tasks.

More recently, \cite{Shenouda24VariationSpacesMultioutput} studied multi-output through vector-valued variation spaces and showed that the corresponding regularization structure promotes shared features across outputs. Their analysis explains feature sharing as a consequence of the induced variation norm, which, for homogeneous activations, is closely related to weight decay and favors atoms that simultaneously contribute to multiple outputs. Our contribution is distinct: we isolate feature-sharing at the level of minimax finite-bit approximation, even before training or generalization enter the picture. In this setting the separation is not a statement about which atoms a regularizer prefers, but a sharp statement about the additional cost a separate model must pay for repeatedly encoding a shared latent feature that a joint description only pays for once. \cite{Weihs26MultipleNeuralOperators} established near-optimal approximation rates and a curse of parametric complexity lower bound for Lipschitz classes of multiple-operator maps. They conclude that multi-task operator learning follows the same scaling laws as single operator learning. In particular, they show that the exponential blow-up in constructive bounds for this setting is an artifact of the analysis and not an intrinsic feature of the multi-operator construction. Their classes do not enforce a shared latent feature, as the classes are related solely through the Lipschitz dependence on the operator descriptor. The hard instances behind their lower-bound are obtained by lifting scalar functionals that already exhibit the curse of rank-one multiple-operator maps.

Our work asks a different question. Rather than bounding the complexity of a class, we describe two regimes of describing the same object and quantify what separates them. For this, we perform an information-theoretic treatment of an explicit constructive function class and explore the effective boundary for hard tasks. In particular, we demonstrate that orthogonality as a geometric constraint on that constructive class is not a prohibitive factor for the effectiveness of multitask learning.

Random wavelet series provide the specific constructive tool used in this paper. Optimal finite-bit encoding of deterministic wavelet expansions goes back to \cite{Cohen01TreeApproximationOptimal}, who construct tree-based encoders matching the Kolmogorov entropy of Besov balls. \cite{Esser24RegularityPropertiesRandom} study regularity properties of random wavelets series obtained by multiplying deterministic wavelet coefficients by unbounded i.i.d. random variables, in particular, they show that for almost every continuous function this randomization is almost surely nowhere locally bounded. \cite{Horst24BesovRegularityRandom} study Besov regularity of random wavelet series, motivated by the Bayesian practice to use random stochastic processes as priors in inverse problems. These works are mainly concerned with regularity properties of random harmonic analysis representation systems without deriving approximation bounds or rates. Randomizing wavelet coefficients gives rise to function classes with interesting microscale behavior, making them a natural fit for scale-based complexity studies. Based on this, we, in contrast to previous works, construct a Rademacher distributed Haar-wavelet series whose unresolved tail has the same uniform norm on every dyadic subset, at every scale. Thus, the unresolved tail therefore carries the majority of the approximation hardness. Similar finite-scale packing arguments appear in the entropy and minimax lower bounds of \cite{Parhi23NearMinimaxOptimalEstimation} and \cite{Siegel24SharpBoundsApproximation}, where a critical resolution is used to certify a rate. The class of Rademacher-Haar wavelets may be read as the extremal object of the first-order analogue of the Radon-BV geometry that \cite{Parhi21BanachSpaceRepresenter} associate with shallow neural networks trained with weight-decay. In this comparison, the corresponding elementary atoms are the Heaviside neurons in \cref{sec:heaviside_approx}. At level $k$, the Rademacher-Haar feature has $2^k$ coefficients all of magnitude $2^{-k-1}$. Hence the $\ell^1$ budget of the coefficients at this level is $1/2$, independent of signs. In this limited sense, our class is then best understood as a boundary-type model for the corresponding variation geometry, not as a bounded subset of the associated variation space in the sense of \cite{Siegel23CharacterizationVariationSpaces}.

Approximation by linear subspaces is one of the most common problems in machine learning. It is straightforward to see that a linear approximation has benefits such as interpretability or ease of computation. Perhaps the most popular technique in dimensionality reduction is principal component analysis (PCA) \citep{Jolliffe02PrincipalComponentAnalysis}, which aims to reduce the complexity of data by finding the exact n-dimensional subspace that explains the most variance. A functional extension of this idea has become popular in recent years by functional principal component analysis (fPCA) \citep{Ramsay05FunctionalDataAnalysis}. In both approaches, orthogonality is key to ensure optimality of the finite-dimensional reduction and stability of the coefficients of the representation. 

%%%%%%%%%%%%%%%%%%%%%%%%%%%%%%%%%%%%%%%%%%%%%%%%%%
%%%%%%%%%%%%%%%%%%%%%%%%%%%%%%%%%%%%%%%%%%%%%%%%%%
%%%%%%%%%%%%%%%%%%%%%%%%%%%%%%%%%%%%%%%%%%%%%%%%%%
\section{Preliminaries and Notation}

A central element of our analysis is discretization to $n\in\nn$ bits, that is, to an element in $\{0,1\}^n$.
This is a natural setting in information theory and can be formalized via the following classes of encoder/decoder pairs and the corresponding distortion measure:
\begin{definition}[$n$-Bit Codecs and Distortion]
\label[definition]{def:prelim_codec}
    Let $\fSpace{X}$ be a Banach space, $\set{S}\subset\fSpace{X}$, $n\in\nn$.

    We define the classes of $n$-bit encoding/decoding pairs as
    \begin{align}
        \codecClass{\set{S}}{\fSpace{X}}[n]
        \defeq 
        \{(\encoder[n],\decoder[n])|\encoder[n]\deffun \set{S}\to\{0,1\}^n,\,\decoder[n]\deffun \{0,1\}^n\to\fSpace{X}\}
    \end{align}
    and the set of all codecs as the sequences of encoder/decoder pairs over $n\in\nn$
    \begin{align}
        \codecClass{\set{S}}{\fSpace{X}}\defeq\bigtimes_{n\in\nn}\codecClass{\set{S}}{\fSpace{X}}[n].
    \end{align}
    Furthermore, define the best distortion with encoder/decoder pairs from $\codecClass{\set{S}}{\fSpace{X}}[n]$ as
    \begin{align}
        \distortion{\set{S}}{\fSpace{X}}[n]
        \defeq
        \inf_{(\encoder,\decoder)\in \codecClass{\set{S}}{\fSpace{X}}[n]} \sup_{x\in\set{S}}\norm{x-\decoder\circ\encoder(x)}[\fSpace{X}].
    \end{align}
\end{definition}

Although the formulation in encoder/decoder pairs is natural in information theory, our work is not focused on the actual representation in terms of bits.
Rather, we are interested in the composition of the encoder and decoder as a map from $\set{S}$ to $\fSpace{X}$, where the number of bits $n$ corresponds to the cardinality of the image of the decoder.
We can therefore equivalently consider $B$-bit discretization maps, which are defined via the image set $\set{C}$ of the decoder as follows.
\begin{definition}[Discretization Maps]
\label[definition]{def:disc_maps}
    Let $\fSpace{X}$ be a Banach space.
    A map $\Delta \deffun \fSpace{X}\to \fSpace{X}$ is called a $B$-bit discretization map if $\card{\operatorname{range}(\Delta)}=2^B$ for some $B\in\nn_0$.
    For a fixed $B\in\nn_0$ and $\set{C}\subset \fSpace{X}$ with $\card{\set{C}}=2^B$ we define the projective $B$-bit discretization map $\Delta_\set{C}\deffun \fSpace{X}\to\fSpace{X}$ as
    \begin{align}
        \Delta_\set{C}(x)\defeq\argmin_{y\in\set{C}}\norm{x-y}[\fSpace{X}].
    \end{align}
\end{definition}
Note that the $\argmin$ can in general result in ties, which have to be resolved in some way.
The precise way of resolving ties is not important for the following statements and we will omit further discussions.

For our results on the Rademacher-Haar wavelet series, we often require to partition the domain into a family of $2^L$ disjoint intervals for any $L\in\nn$.
Specifically, we use the following notation for two types of these dyadic sets.
\begin{definition}[Dyadic Subsets]
    For $L\in\nn_0$ and $i\in[2^L]$ we introduce the notation
    \begin{align}
        \dyadicU{L}{i}\defeq 2^{-L}[i,i+1),\qquad
        \dyadicV{L}{i}\defeq 2^{-L+1}[i,i+1)-1
    \end{align}
    as the dyadic subsets of $\set{U}\defeq[0,1)$ and $\set{V}\defeq[-1,1)$.
\end{definition}
Throughout this work, we write $\ld := \log_2$ for the binary logarithm, use the index set $\nn_{-1}:= \{-1, 0, 1, 2, \dots\}$, and consider orthogonality in the $L^2$-sense.
Finally, in order to simplify notation for indices, we use the notation
\begin{align}
    [M]\defeq \{0,\ldots,M-1\}.
\end{align}

%%%%%%%%%%%%%%%%%%%%%%%%%%%%%%%%%%%%%%%%%%%%%%%%%%
%%%%%%%%%%%%%%%%%%%%%%%%%%%%%%%%%%%%%%%%%%%%%%%%%%
%%%%%%%%%%%%%%%%%%%%%%%%%%%%%%%%%%%%%%%%%%%%%%%%%%
\section{Information-Theoretic Approximation Gap}
The main theorem of the present paper is on the existence of a function class, which has a non-trivial complexity and a corresponding orthogonal feature map for which we show a gap between joint and separate coding.
In this section, we first introduce the two coding schemes in detail, the initial class of Rademacher-Haar wavelets and Sawtooth-Walsh functions as our orthogonal feature maps.
Based on these definitions, we present our main result in \cref{thm:approximation_gap}.

In \cref{sec:rad_haar_properties} we then show tight upper and lower bounds on the minimax discretization rates for the Rademacher-Haar wavelet series and some properties of the Sawtooth-Walsh orthogonal feature maps.
We then use these properties in \cref{sec:sep_joint_bounds} in order to prove a lower bound for separate discretization and an upper bound for joint discretization.
In \cref{sec:proof_main_theorem} we combine the previous result and prove \cref{thm:approximation_gap}.

We distinguish between joint and separate coding.
The main idea here is that joint coding can access all output dimensions of a multivariate function class when discretizing, whereas separate coding can only access a single output dimension at a time and needs to discretize the dimensions independently without any cross-information.
\begin{definition}[Joint and Separate Coding]
    Let $M\in\nn$, $\set{H}\subset L^\infty([0,1];\rr[M])$, and for $i\in[M]$ define the marginal sets
    \begin{align}
        \set{H}_i\defeq\{h_i|(h_0,\ldots,h_{M-1})\in\set{H}\}.
    \end{align}
    For every $n\in\nn[M]$ and $i\in[M]$, 
    we then consider the marginal encoder/decoder pairs $\codecClass{\set{H}_i}{L^\infty([0,1])}[n]$ as in \cref{def:prelim_codec}
    and define the class of separate encoder/decoder pairs as
    \begin{align}
        \sepClass[n]\defeq \left\{
                \left((\encoder<0>[n_0],\ldots,\encoder<M-1>[n_{M-1}]),(\decoder<0>[n_0],\ldots,\decoder<M-1>[n_{M-1}])\right)
            \middle|
                (\encoder<j>[n],\decoder<j>[n])\in\codecClass{\set{H}_j}{L^\infty([0,1])}[n]
            \right\}.
    \end{align}
    For every $N\in\nn$, the class of joint encoder/decoder pairs is defined via \cref{def:prelim_codec} as
    \begin{align}
        \jointClass[N]\defeq \codecClass{\set{H}}{L^\infty([0,1];\rr[M])}.
    \end{align}
\end{definition}
Note that with this definition we consider 
\begin{align}
    \encoder<\mathrm{S}>[n]=(\encoder<0>[n_0],\ldots,\encoder<M-1>[n_{M-1}])\quad\text{and}\quad\decoder<\mathrm{S}>[n]=(\decoder<0>[n_0],\ldots,\decoder<M-1>[n_{M-1}])
\end{align}
as maps
\begin{align}
     \encoder<\mathrm{S}>[n]\deffun \bigtimes_{i\in[M]}\set{H}_i\to {\{0,1\}^{\sum_{i=0}^{M-1} n_i}}\quad\text{and}\quad\decoder<\mathrm{S}>[n]\deffun{\{0,1\}^{\sum_{i=0}^{M-1} n_i}}\to L^\infty([0,1])^M,
\end{align}
respectively.

For our initial function class that represents the shared feature,
we consider random wavelet series that are based on the Haar wavelet and use Rademacher random variables as weights.
Note that we consider minimax rates, which means that our later results are not in expectation and we therefore do not require a definition for a probability measure on the function class.
\begin{definition}[Rademacher-Haar Wavelet Series]
    Let $k,L\in\nn_0$ and $i\in[2^k]$.
    The mother wavelet of the Haar wavelet is given by
    $\wavelet=\chi_{[0,1/2)}-\chi_{[1/2,1)}$
    and its $i$-th child wavelet on level $k$ is given by
    $\wavelet[k][i]=2^{-k-1}\wavelet(2^k\cdot-i)$.
    We denote the parameter space of the Rademacher-Haar wavelet series as
    $\CoeffSpace=\times_{l=0}^\infty\{\pm 1\}^{2^l}$ and for any $\coeff\in\CoeffSpace$ we consider the realization functions
    \begin{align}
        h_k(\cdot;\coeff)&\defeq\sum_{m=0}^{2^k-1}\coeff[k][m]\wavelet[k][m](\cdot),
        \qquad
        g_L(\cdot;\coeff)\defeq\sum_{l=0}^Lh_l(\cdot;\coeff),
        \qquad\text{and}\qquad
        g(\cdot;\coeff)\defeq \lim_{L\to\infty}g_L(\cdot;\coeff).
    \end{align}
    The class of Rademacher-Haar wavelet series is then given by
    \begin{align}
        \RadHaarSpace
        \defeq
        \left\{
        g(\cdot;\coeff)
        \middle|
        \coeff\in\CoeffSpace
        \right\}
    \end{align}
    and the class of $L$-level truncated Rademacher-Haar wavelet series with $L\in\nn_{-1}$ by
    \begin{align}
        \RadHaarSpace[L]
        \defeq
        \left\{
        g_L(\cdot;\coeff)
        \middle|
        \coeff\in\CoeffSpace[L]
        \right\}.
    \end{align}
    For an arbitrary $g\in\RadHaarSpace$ we use the notation $g_L\in\RadHaarSpace[L]$ to denote the corresponding $L$-level truncation. 
\end{definition}
Random wavelet series have a very interesting property: the random behavior on each dyadic subset is similar to all the other subsets.
That is, if we look at dyadic subsets of length $2^{-L}$ and only look at the tail of the series starting from index $L$, then the random behavior on different subsets is indistinguishable and requires the same complexity for discretization.
This can be seen in \cref{fig:rad_haar_wavelet}.
\begin{figure}
    \centering
    \includegraphics[page=1]{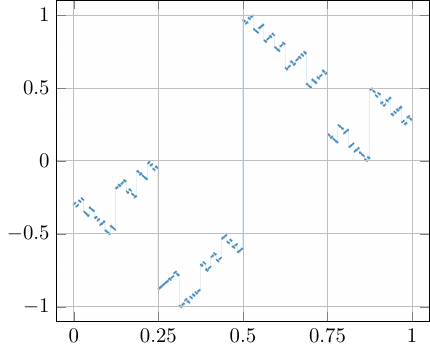}
    \hfill
    \includegraphics[page=2]{figures/haar_wavelet/function_plot.pdf}
    \caption{An example of the Rademacher-Haar wavelet series (left) together with the tail starting from index $L=2$ (right).}
    \label{fig:rad_haar_wavelet}
\end{figure}
Another observation from \cref{fig:rad_haar_wavelet} for the Rademacher-Haar wavelet series is that restricting the function to a dyadic subset also restricts the image of the function to a dyadic subset.
That is, composing the wavelet series with a sawtooth function is essentially the same as multiplying with a characteristic function and restricting to the tail of the wavelet.
A simple idea for orthogonalization would therefore be to apply this transformation for an appropriate number of subsets and obtain orthogonality by having non-overlapping supports.
However, this approach would result in the same discretization performance in the joint and the separate approaches as there is no shared information between the orthogonal features.

Instead, we resort to the following combination of Sawtooth waves and Walsh functions.
This construction keeps the complexity of the full tail on every dyadic subset, while strictly enforcing orthogonality.
\begin{definition}[Sawtooth-Walsh functions]
    Let $L\in\nn_0$ and $\walshMat{L}\in\{\pm 1\}^{2^L\times 2^L}$ be a fixed Hadamard matrix and denote its columns by $\{\walshVec{L}{i}\in\{\pm 1\}^{2^L}|i\in[2^L]\}$.
    We define the sawtooth function as
    \begin{align}
        \psi_0&:\rr \to \rr, & \psi_0(x)&\defeq x - \lfloor x\rfloor - 1/2,\\
        \saw{L}&:\rr \to \rr, & \saw{L}(x)&\defeq2\sqrt{3}\psi_0(2^L(x+1)/2),
    \end{align}
    and the Walsh functions of order $L$ that are induced by $\walshMat{L}$ as% the set $\{\walsh*{L}{i}:\dyadicU{0}{0}\to\{\pm 1\},\,i\in[2^L]\}$ such that for all $i,j\in[2^L]$ we set
    \begin{align}
        \walsh*{L}{i}:\dyadicU{0}{0}\to\{\pm 1\},
        \qquad
        \walsh*{L}{i}(x)=\walshVec{L}{i}[j]\quad\text{for all}\quad x\in\dyadicU{L}{j}\quad\text{and}\quad i,j\in[2^L].
    \end{align}
    We furthermore define the stretched Walsh functions induced by $\walshMat{L}$ as 
    \begin{align}\walsh{L}{i}\deffun \dyadicV{0}{0}\to \{\pm 1\},\qquad\walsh{L}{i}(x)\defeq \walsh*{L}{i}((x+1)/2)\quad\text{for all}\quad i\in[2^L].\end{align}
    Finally, for $M\in\nn$ with $2^{L-1}< M\leq 2^L$, we define the Walsh-Sawtooth functions of order $L$ as $(\saw{L}\cdot\walsh{L}{0},\ldots,\saw{L}\cdot\walsh{L}{M-1})$.
\end{definition}

Based on the class of Rademacher-Haar wavelet series and the Walsh-Sawtooth functions, we can now state our main theorem as follows.
\begin{theorem}[Multitask-Gap]
\label{thm:approximation_gap}
    Let $\set{G}\subset L^\infty([0,1])$ be the class of Rademacher-Haar wavelets, 
    $M\in\nn$ be the number of orthogonal features,
    $n\in\nn[M]$ with $n_i\geq 2$ be the number of bits in separate coding,
    $N=M\min_{i\in[M]}n_i$ be the number of bits in joint coding,
    $(f_0,\ldots,f_{M-1}):\rr\to\rr$ be the Sawtooth-Walsh functions of order $L\defeq \lceil\ld(M)\rceil$,
    and $\set{F}\defeq \{(f_0\circ g,\ldots,f_{M-1}\circ g)|g\in\set{G}\}$ be the class of orthogonal target functions. 
    Then, for any $n\in\nn[M]$ and $N=M\min_{i\in[M]}n_i$, the optimal joint coding scheme
    \begin{align}
        \codec<\mathrm{joint}>[N]&=(\encoder<\mathrm{J}>[N],\decoder<\mathrm{J}>[N])
        =
        \argmin_{(\encoder,\decoder)\in\jointClass[N]}
        \sup_{h\in\set{F}}
        \norm{h-\decoder\circ\encoder(h)}[L^\infty([0,1];\rr[M])],
        \intertext{and the optimal separate coding scheme}
        \codec<\mathrm{sep}>[n]&=(\encoder<\mathrm{S}>[n],\decoder<\mathrm{S}>[n])
        =
        \argmin_{(\encoder,\decoder)\in\sepClass[n]}
        \sup_{h\in\set{F}}
        \norm{h-\decoder\circ\encoder(h)}[L^\infty([0,1];\rr[M])],
    \end{align}
    exhibit the gap
    \begin{align}
        \frac{\norm{h-(\decoder<\mathrm{S}>[n]\circ\encoder<\mathrm{S}>[n])(h)}[L^\infty([0,1])]}
        {\norm{h-(\decoder<\mathrm{J}>[N]\circ\encoder<\mathrm{J}>[N])(h)}[L^\infty([0,1])]}
        \geq \frac{M}{4}\cdot\frac{1+\frac{1}{N}}{1+\frac{2M^2}{N}},
    \end{align}
    for all $h\in\set{F}$ if $N/M=2^K-2^L$ for some $K\in\nn$ and for some $h\in\set{F}$ if $N/M\neq 2^K-2^L$ for all $K\in\nn$.
\end{theorem}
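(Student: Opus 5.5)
The plan is to sandwich the two distortions between explicit bounds: a lower bound for any separate codec and an upper bound for one concrete joint codec. Then take the quotient and simplify.

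\emph{Scalar class.} I first establish the minimax theory of $\RadHaarSpace$ in $L^\infty$:
\begin{align*}
\distortion{\RadHaarSpace}{L^\infty([0,1])}[B]=2^{-K-1}\quad\text{when } B=2^{K+1}-1 .
\end{align*}
This is the number of coefficients up to level $K$. The upper bound holds because the truncation $g\mapsto g_K$ is a $B$-bit discretization. It has error $\norm{g-g_K}[\infty]=\sum_{l>K}2^{-l-1}=2^{-K-1}$, since the tail attains its sup on every dyadic cell of level $K+1$.

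The lower bound is a packing argument. Two functions whose coefficients differ somewhere up to level $K$ are at uniform distance at least $2^{-K}$. Indeed, if they first differ at level $k$, choose the later signs adversarially on that cell so that the differences add up. With fewer than $2^B$ codewords, pigeonhole forces two such functions into one cell, which gives error at least $2^{-K-1}$. For general bit budgets I interpolate monotonically between consecutive $K$. This yields explicit constants, and I lose at most a factor of $2$ there.

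\emph{Sawtooth-Walsh transfer.} Next I show that composition with $f_i=\saw{L}\cdot\walsh{L}{i}$ preserves hardness and admits a decoupled structure.

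On $\dyadicV{L}{j}$, the map $\saw{L}$ is affine with slope $2\sqrt3\,2^{L-1}$, and $\walsh{L}{i}$ is a constant sign there. Since $g$ takes values in $[-1,1)$ and $g_{L-1}$ decides which $\dyadicV{L}{j}$ contains $g(x)$, we get $f_i\circ g=\pm c_L\,(g-g_{L-1})+\text{const}$ pointwise. Hence $f_i\circ g$ is, up to signs determined by $g_{L-1}$, an affine image of the tail $g-g_{L-1}$. Its $L^\infty$ distortion is therefore $c_L$ times that of the tail class. Orthogonality follows from the Hadamard columns, using that $g$ pushes Lebesgue measure to uniform measure on $[-1,1)$, together with the mean-zero property of $\psi_0$.

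\emph{Separate lower bound.} Each marginal $\set{F}_i$ contains a copy of the full tail class at scale $c_L$. Component $i$ with $n_i$ bits therefore has error at least $c_L$ times the tail distortion at $n_i$ bits, roughly $c_L 2^{L}\cdot 2^{-(\ld n_i)}$. The key point is that coding the coordinates separately means each coordinate must pay for the whole tree. So the worst component has error at least about $c_L 2^{L-1}/(\min_i n_i+2^L)$.

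\emph{Joint upper bound.} A joint codec encodes $g$ once. It spends $2^L-1$ bits on $g_{L-1}$ and $N-2^L+1$ bits on the tail, and decodes all $M$ coordinates from that. Its error is about $c_L 2^{L-1}/(N+1)$.

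\emph{Quotient.} Dividing, with $N=M\min n_i$ and $2^L<2M$, gives
\begin{align*}
\frac{M}{4}\cdot\frac{1+1/N}{1+2M^2/N}.
\end{align*}
When $N/M=2^K-2^L$, the budgets hit truncation levels exactly. In that case the equidistribution of the tail makes the error equal for \emph{every} $h$, which gives the bound for all $h\in\set{F}$. Otherwise, the pigeonhole argument only yields one bad $h$.

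\emph{Main obstacle.} The hardest step is the lower bound for non-dyadic bit budgets, together with the exact bookkeeping between bits and levels needed to reproduce the constants $1/4$ and $2M^2$. I would first try the monotone interpolation between adjacent $K$, and then tune the offset terms.
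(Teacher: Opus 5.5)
Your plan follows the paper's proof step for step. The shared steps are:
\begin{itemize}
\item exact truncation optimality, from tail equidistribution plus a packing;
\item the identity $f_i\circ g=\pm c_L(g-g_{L-1})$ with $c_L=2^L\sqrt3$, where the sign $\walsh{L}{i}(g_{L-1})$ is absorbed into the tail coefficients so that each marginal is $c_L$ times the tail class;
\item a joint codec spending $2^L-1$ bits on $g_{L-1}$ and $N-2^L+1$ bits on the tail;
\item the same dyadic/non-dyadic case split.
\end{itemize}
Your orthogonality argument, via $g_{\#}\lambda$ being uniform on $[-1,1)$, is a valid shortcut for the paper's Haar computation. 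What it actually uses is that $\int_{\dyadicV{L}{j}}(\saw{L})^2$ is the same on every cell, not that $\psi_0$ has mean zero. Two steps, however, do not go through as written.

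\textbf{The packing is off by one.} A $B$-bit codec has $2^B$ codewords. For $B=2^{K+1}-1$ there are exactly $2^B$ coefficient patterns up to level $K$. So ``fewer than $2^B$ codewords'' only bounds the $(B-1)$-bit distortion, and it does not prove the equality $2^{-K-1}$ you assert.

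The missing idea, which the paper uses in its lower bounds, is to add opposite-tail extensions. Extend each pattern once by all-$(+1)$ and once by all-$(-1)$ coefficients beyond the last resolved level. The two extensions differ by $2^{-K}$ near $x=0$. Distinct patterns are $2^{-K}$-separated for \emph{arbitrary} tails, by the paper's minimum-distance lemma. So no adversarial choice of later signs is needed; a packing must fix one extension per pattern and could not make pair-dependent choices anyway. This gives $2^{B+1}$ separated points and forces a collision in any $2^B$-point codebook.

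For the polynomial separate bound alone, your version happens to suffice: $L_1=\lfloor\ld(B+2^L)\rfloor$ still yields $c_L2^{-L_1-1}\geq c_L/(2(B+2^L))$. But at the budgets $\min_i n_i=2^K-2^L$, your weaker bound only places the optimal separate distortion in $[c_L2^{-K-1},c_L2^{-K}]$. You therefore cannot conclude that truncation is an optimal separate codec, and your ``for all $h$'' clause rests exactly on that.

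\textbf{The constants do not produce $M/4$.} You state the separate bound and the joint error with the same prefactor $c_L2^{L-1}$. Their quotient is then $M\frac{1+1/N}{1+2^LM/N}$, four times what you assert. With your $c_L=2^L\sqrt3$, the factor $2^{L-1}$ is also spurious, since the tail class at $B$ bits has distortion between $1/(2(B+2^L))$ and $2/(B+2^L)$.

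The correct pair of bounds is:
\begin{itemize}
\item separate: $c_L/(2(\min_in_i+2^L))$, losing a factor two by rounding the separate budget up to the next full level;
\item joint: $c_L2^{-\lfloor\ld(N+1)\rfloor}\leq 2c_L/(N+1)$, losing a factor two by rounding the joint tail budget $N-2^L+1$ down.
\end{itemize}
The $1/4$ is the product of these two losses, and $2^L<2M$ then turns $2^LM$ into $2M^2$.
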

Note that by choosing $N=M\min_{i\in[M]}n_i$ bits for joint coding, we indeed allocate at least as many bits to separate coding as to joint coding, while still obtaining a better performance in joint coding.

In order to prove this theorem, we first develop some properties of the Rademacher-Haar wavelet series in \cref{sec:rad_haar_properties},
Second, we derive a lower bound on the distortion for separate coding and an upper bound for the distortion in joint coding in \cref{sec:sep_joint_bounds}.
As the third and last step, we then combine these bounds to prove \cref{thm:approximation_gap} in \cref{sec:proof_main_theorem}.

\subsection{Properties of the Rademacher-Haar Wavelets and Sawtooth-Walsh Functions}
\label{sec:rad_haar_properties}

A major component in the analysis of the Rademacher-Haar wavelet is the geometric structure of the tails of the series.
In our first lemma we show that these tails have exactly the same supremum on every dyadic set of level $L+1$ for every member of the class of functions.
\begin{lemma}[Tail Supremum]
\label[lemma]{thm:truncation_error}
    Let $L\in\nn_{-1}$ and $j\in[2^{L+1}]$.
    Then for any $g\in\RadHaarSpace$ and the corresponding $L$-level truncation $g_L$, with $g_{-1}$ considered to be the zero function, we get
    \begin{align}
        \operatornamewithlimits{ess\,sup}_{x\in\dyadicU{L+1}{j}}g(x)-g_{L}(x)
        =
        \operatornamewithlimits{ess\,sup}_{x\in\dyadicU{L+1}{j}}g_{L}(x)-g(x)
        =
        2^{-L-1}.
    \end{align}
\end{lemma}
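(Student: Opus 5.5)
The plan is to write the tail explicitly as $g-g_L=\sum_{l\geq L+1}h_l(\cdot;\coeff)$ and restrict it to a fixed dyadic interval $\dyadicU{L+1}{j}$. On this interval the wavelets of level $l\geq L+1$ that do not vanish are exactly those with support inside $\dyadicU{L+1}{j}$. Rescaling by $x\mapsto 2^{L+1}x-j$ turns the tail into $2^{-L-1}\,\tilde g(2^{L+1}x-j)$, where $\tilde g=g(\cdot;\tilde{\coeff})$ is again a full Rademacher-Haar series. Here $\tilde{\coeff}$ is the subtree of signs rooted at $(L+1,j)$. This uses $\wavelet[l][m](x)=2^{-l-1}\wavelet(2^lx-m)$ and that level $l$ wavelets inside $\dyadicU{L+1}{j}$ correspond to level $l-L-1$ wavelets on $[0,1)$. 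The case $L=-1$ is the trivial instance $g_{-1}=0$ with no rescaling. It therefore suffices to prove that every $g\in\RadHaarSpace$ satisfies $\operatornamewithlimits{ess\,sup}_{[0,1)} g=\operatornamewithlimits{ess\,sup}_{[0,1)}(-g)=1$. Since $-g=g(\cdot;-\coeff)\in\RadHaarSpace$, only the first equality needs proof.

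\textbf{Upper bound.} Each term satisfies $\norm{h_l}[\infty]=2^{-l-1}$ because the supports at level $l$ are disjoint. Hence $|g|\leq\sum_{l\geq0}2^{-l-1}=1$ everywhere, and the series converges uniformly, so $g$ is well defined. For the same reason $|g-g_K|\leq 2^{-K-1}$ for every $K$.

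\textbf{Lower bound.} This is the main step. I build a nested chain of dyadic intervals greedily. Start with $I_0=[0,1)$. Given $I_l=\dyadicU{l}{m_l}$, pick $I_{l+1}$ to be the half of $I_l$ on which $\coeff[l][m_l]\wavelet[l][m_l]=+2^{-l-1}$. On $I_{K+1}$ the partial sum then equals $g_K=\sum_{l=0}^K2^{-l-1}=1-2^{-K-1}$, since all terms of levels $\leq K$ are constant on $I_{K+1}$ and were chosen positive. Combined with the tail estimate, this gives $g\geq 1-2^{-K}$ on $I_{K+1}$. The interval $I_{K+1}$ has positive measure $2^{-K-1}$, so $\operatornamewithlimits{ess\,sup}g\geq1-2^{-K}$ for every $K$, and hence $\operatornamewithlimits{ess\,sup}g\geq 1$.

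The only delicate point is bookkeeping in the rescaling step: checking that the restriction of the tail to $\dyadicU{L+1}{j}$ is exactly a scaled copy of a member of $\RadHaarSpace$, with the factor $2^{-L-1}$ coming from the normalization $2^{-k-1}$. The essential-supremum subtlety is handled by working with intervals of positive measure rather than pointwise limits.
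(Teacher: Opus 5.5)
Your proof is correct and follows essentially the same route as the paper: a greedy chain of nested dyadic halves on which each successive coefficient-weighted wavelet equals $+2^{-l-1}$, played against the trivial upper bound $\sum_{l} 2^{-l-1}$. The differences are cosmetic. You rescale to the root case $L=-1$ instead of starting the chain at the node $(L+1,j)$, and you use $-g\in\mathcal{G}$ instead of a second, reversed chain. You also bound $g$ from below on all of $I_{K+1}$, which settles the essential supremum more directly than the paper's evaluation at left endpoints followed by a separate neighborhood argument.
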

The idea for the proof is that each element of the series has a positive contribution on half of its support.
Depending on the sign of the corresponding coefficient, this is either the first half or the second half.
By following the indices of subsets where the contribution is positive, we construct a sequence that converges to the supremum and by the convergence of the lower bound to the upper bound we then show the precise value of the supremum.
\begin{proof}%
    \NewDocumentCommand{\ip}{m}{\hat{i}_{#1}} % index of dyadic set with coefficients +1
    \NewDocumentCommand{\yp}{m}{\hat{y}_{#1}}
    \NewDocumentCommand{\zp}{m}{\hat{z}_{#1}}
    \NewDocumentCommand{\Up}{m}{\hat{\set{U}}_{#1}}
    \NewDocumentCommand{\im}{m}{\check{i}_{#1}} % index of dyadic set with coefficients -1
    \NewDocumentCommand{\ym}{m}{\check{y}_{#1}}
    Let $\coeff\in\CoeffSpace$ and define the sequence $\{\hat{i}_l\}_{l=L+1}^\infty$ of ``up-indices'' recursively, starting with
    $\ip{L+1}\defeq j$ and iterating
    \begin{align}
        \ip{l}&\defeq 
        \begin{cases}
            2\ip{l-1},&\coeff[l-1][\ip{l-1}]=1,\\
            2\ip{l-1}+1,&\coeff[l-1][\ip{l-1}]=-1
        \end{cases}
    \end{align}
    for every $l\geq L+2$.
    We then observe for all $k,l\in\nn$ with $k>l\geq L+2$ that
    \begin{align}
        \coeff[l-1][\ip{l-1}]\wavelet[l-1][\ip{l-1}](\ip{l}2^{-l})=2^{-l}
        \quad\text{and}\quad
        \dyadicU{k}{\ip{k}}\subseteq\dyadicU{l}{\ip{l+1}}\subset\dyadicU{l}{\ip{l}}\subseteq\dyadicU{L+1}{j}.
    \end{align} 
    Furthermore, the $l$-th level of the Haar wavelet is constant on $\dyadicU{l+1}{\ip{l+1}}$,
    thus,
    \begin{align}
        h_l\left(\ip{k}2^{-k}\right)
        =
        \coeff[l][\ip{k}]\wavelet[l][\ip{k}]\left(\ip{k}2^{-k}\right)
        =
        \coeff[l][\ip{k}]\wavelet[l][\ip{l}](\ip{l+1}2^{-l-1})=2^{-l-1}.
    \end{align}
    With the above considerations it follows for all $k>L+2$ that
    \begin{align}
        g(\ip{k}2^{-k})-g_{L}(\ip{k}2^{-k})
        &=
        \sum_{l=L+1}^\infty h_l(\ip{k}2^{-k})
        =
        \sum_{l=L+1}^{k-1} 2^{-l-1}
        +
        \sum_{l=k}^\infty h_l(\ip{k}2^{-k})
        \geq
        2^{-L-1}-2^{-k+1}
    \end{align}
    and trivially also the uniform upper bound $g(y)-g_{L}(y)=\sum_{l=L+1}^\infty h_l(y)\leq 2^{-L-1}$ for all $y\in\dyadicU{L+1}{j}$.
    Therefore,
    \begin{align}
        2^{-L-1}
        \geq 
        \sup_{y\in\dyadicU{L+1}{j}}g(y)-g_{L}(y)
        \geq
        \lim_{k\to\infty}g(\ip{k}2^{-k})-g_{L}(\ip{k}2^{-k})
        =2^{-L-1}.
    \end{align}
    The supremum for $-(g-g_L)$ follows analogously with
    \begin{align}
        \im{l}\defeq
        \begin{cases}
            2\im{l-1}+1,&\coeff[l-1][\im{l-1}]=1,\\
            2\im{l-1},&\coeff[l-1][\im{l-1}]=-1.
        \end{cases}
    \end{align}
    For the essential supremum we observe that for all $l>L$ and all $x\in\dyadicU{l}{\ip{l}}$ that
    \begin{align}
        \inf_{y\in\dyadicU{l}{\ip{l}}}g(y)-g_L(y)
        &=
        -g_L(x)+g_{l-1}(x)-\sup_{y\in\dyadicU{l}{\ip{l}}}g_{l-1}(y)-g(y)
        \\&
        =
        -g_L(x)
        +\sup_{y\in\dyadicU{l}{\ip{l}}}\big(g(y)-2^{-l}\big)
        -\sup_{y\in\dyadicU{l}{\ip{l}}}\left(g_{l-1}(y)-g(y)\right)
        \\&
        =
        \sup_{y\in\dyadicU{L+1}{j}}g(y)
        -g_L(y)
        -2^{-l+1}.
    \end{align}
    That is, for every $\varepsilon>0$ we can find a half-open neighborhood of the supremum with non-zero volume such that $g(x)-g_L(x)$ is within $\varepsilon$ of the supremum.
\end{proof}

Next we show a lower bound on the distance between two realizations of the wavelet series.
Essentially, this lower bound just depends on the first level of the wavelet series on which the coefficients differ.
\begin{lemma}[Minimum Distance of Wavelet Series]
    \label[lemma]{thm:wavelet_distance_lower}
    Let $L_0\in\nn_0$, $j\in[2^{L_0}]$, and
    $\coeff,\coeffC\in\CoeffSpace$ be arbitrary but fixed
    and set\footnote{Note that we use the convention $\min \emptyset=-\max \emptyset=\infty$.}
    $L\defeq\min\{l\in\nn_0|\exists x\in\dyadicU{L_0}{j}:g_l(x;\coeff)\neq g_l(x;\coeffC)\}$,
    then
    \begin{align}
        \norm{g(\cdot;\coeff)-g(\cdot;\coeffC)}[L^\infty\!\left(\dyadicU{L_0}{j}\right)]\geq 2^{-\max\{L,L_0-1\}}.
    \end{align}
\end{lemma}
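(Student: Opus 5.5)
We split into cases according to the position of $L$ relative to $L_0$. The two facts we use throughout are the following. First, the level-$l$ Haar term $h_l$ is constant on every dyadic set $\dyadicU{l+1}{i}$, and in particular on $\dyadicU{L_0}{j}$ whenever $l\leq L_0-1$. Second, changing one coefficient at level $l$ changes $h_l$ by $\pm 2\cdot 2^{-l-1}=\pm 2^{-l}$ on that coefficient's support. If $L=\infty$, the right-hand side is $2^{-\infty}=0$ and there is nothing to prove, so we assume $L<\infty$. Write $\coeff$ and $\coeffC$ for the two parameter sequences and $D\defeq g(\cdot;\coeff)-g(\cdot;\coeffC)$.

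\emph{Case $L\geq L_0$.} Then $g_{L-1}(\cdot;\coeff)=g_{L-1}(\cdot;\coeffC)=:A$ on $\dyadicU{L_0}{j}$, and some level-$L$ coefficient with support $\dyadicU{L}{m}\subseteq\dyadicU{L_0}{j}$ differs. Up to swapping the roles of $\coeff$ and $\coeffC$, there is a half $\dyadicU{L+1}{m'}$ of that support on which $h_L(\cdot;\coeff)=+2^{-L-1}$ and $h_L(\cdot;\coeffC)=-2^{-L-1}$. By \cref{thm:truncation_error} (or just the trivial bound), the tails $g-g_L$ lie in $[-2^{-L-1},2^{-L-1}]$, so on $\dyadicU{L+1}{m'}$ we get $g(\cdot;\coeffC)\leq A$ pointwise. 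Hence $D\geq g(\cdot;\coeff)-A$ there. Writing $g(\cdot;\coeff)-A=2^{-L-1}+\bigl(g(\cdot;\coeff)-g_L(\cdot;\coeff)\bigr)$, \cref{thm:truncation_error} gives that its essential supremum on $\dyadicU{L+1}{m'}$ is exactly $2^{-L-1}+2^{-L-1}=2^{-L}$. This yields $\norm{D}[L^\infty(\dyadicU{L_0}{j})]\geq 2^{-L}=2^{-\max\{L,L_0-1\}}$.

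\emph{Case $L\leq L_0-1$.} All levels $l\leq L_0-1$ are constant on $\dyadicU{L_0}{j}$, so there $D=c+t_{\coeff}-t_{\coeffC}$, where $t=g-g_{L_0-1}$ denotes the tail and $c$ is a constant. By minimality of $L$, the constant is $c=\sum_{l=L}^{L_0-1}d_l$ with $d_L=\pm 2^{-L}$ and $d_l\in\{0,\pm 2^{-l}\}$. Therefore
\begin{align}
    \abs{c}\geq 2^{-L}-\sum_{l=L+1}^{L_0-1}2^{-l}=2^{-L_0+1}.
\end{align}
Assume without loss of generality that $c>0$. Each Haar wavelet of level $\geq L_0$ supported in $\dyadicU{L_0}{j}$ has mean zero there, and both tails are absolutely convergent series of such wavelets. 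Hence $t_{\coeff}-t_{\coeffC}$ has zero mean over $\dyadicU{L_0}{j}$, so its essential supremum is $\geq 0$. It follows that $\norm{D}[L^\infty(\dyadicU{L_0}{j})]\geq\operatorname{ess\,sup}D\geq c\geq 2^{-L_0+1}=2^{-\max\{L,L_0-1\}}$.

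The main obstacle is the first case. There one must rule out that the extremal points of $g(\cdot;\coeff)$ and $g(\cdot;\coeffC)$ cancel each other. The pointwise sign argument on the half $\dyadicU{L+1}{m'}$, where one function sits entirely below the common level $A$, avoids having to construct a simultaneous extremal sequence for both functions. It also uses \cref{thm:truncation_error} in its essential-supremum form, so that the bound holds in $L^\infty$ and not only for the pointwise supremum.
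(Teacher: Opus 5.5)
Your proof is correct, and your case $L\geq L_0$ is essentially the paper's argument. The paper also restricts to a half $\dyadicU{L+1}{2i}$ of the support of a differing level-$L$ wavelet. It applies \cref{thm:truncation_error} to get $\operatorname{ess\,sup}\bigl(g(\cdot;\coeff)-g_{L-1}(\cdot;\coeff)\bigr)=2^{-L}$ there, and the trivial tail bound to get $g(\cdot;\coeffC)-g_{L-1}(\cdot;\coeffC)\leq 0$ pointwise.

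You diverge in the case $L\leq L_0-1$. The paper does not split into cases. It sets $K=\max\{L,L_0-1\}$ and works on $\dyadicU{K+1}{k}$, which is all of $\dyadicU{L_0}{j}$ when $L<L_0$. There it runs the same one-sided computation: it lower-bounds $\operatorname{ess\,sup}\bigl(g(\cdot;\coeff)-g_{L-1}(\cdot;\coeff)\bigr)$ by $2^{-K-1}+2^{-L-1}-\sum_{l=L+1}^{K}2^{-l-1}=2^{-K}$, using the tail supremum plus the worst case for the intermediate levels. It then again uses $g(\cdot;\coeffC)-g_{L-1}(\cdot;\coeffC)\leq 0$.

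You instead work with the difference $D$ directly. You isolate the constant offset coming from levels $L,\dots,L_0-1$, bound it below by the same geometric sum, and dispose of the tail difference through its zero mean on $\dyadicU{L_0}{j}$.

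Each route has its advantage:
\begin{itemize}
\item The paper's version gives one formula covering both regimes and reuses \cref{thm:truncation_error} throughout.
\item Your version makes the offset-versus-oscillation structure explicit. In the second case it needs only that Haar wavelets of level $\geq L_0$ integrate to zero over $\dyadicU{L_0}{j}$, together with uniform convergence to justify termwise integration. It therefore does not depend on the extremal sign structure of the tail.
\end{itemize}

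Your handling of essential suprema is also slightly cleaner than the paper's. The paper writes $\sup$ and an equality $\|D\|_{L^\infty}=\sup D$ where only $\geq$ (with the essential supremum) is justified or needed.
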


\begin{proof}
 In case $L=\infty$ we see that the right-hand side is $0$ and therefore the statement is trivially true.
    For $L\geq L_0$ there is an $i\in\{j2^{L-L_0},\ldots,(j+1)2^{L-L_0}-1\}$ such that $\coeff[L][i]\neq\coeffC[L][i]$.
    We define
    \begin{align}
        k\defeq\begin{cases}
            2i,&L\geq L_0,\\
            j,&L<L_0,
        \end{cases}
        \qquad\text{and}\qquad
        K\defeq \max\{L,L_0-1\}.
    \end{align}
    In both cases we see that $h_l(x;\coeff)$ and $h_l(x;\coeffC)$ are constant over $\dyadicU{K+1}{k}$ for all $l\leq K$ (note that this includes $l=L$).
    W.l.o.g. we assume that $h_L(x;\coeff)=-h_L(x;\coeffC)=2^{-L-1}$ for $x\in\dyadicU{K+1}{k}$ (the sign-reversed case follows analogously).
    By using \cref{thm:truncation_error} and bounding the remaining sum with the case of all coefficients being $-1$ we get
    \begin{align}
        \sup_{x\in\dyadicU{K+1}{k}}g(x;\coeff)-g_{L-1}(x;\coeff)
        &=
        \sup_{x\in\dyadicU{K+1}{k}}g(x;\coeff)-g_{K}(x;\coeff)+h_{L}(x;\coeff)+\sum_{l=L+1}^{K}h_{l}(x;\coeff)
        \\&
        \geq
        2^{-K-1}+2^{-L-1}-\sum_{l=L+1}^{K}2^{-l-1}
        =2^{-K}
    \end{align}
    and by the initial assumption $h_L(x;\coeffC)=-2^{-L-1}$ and bounding the tail we get
    \begin{align}
        \sup_{x\in\dyadicU{K+1}{k}}g(x;\coeffC)-g_{L-1}(x;\coeffC)
        =
        \sup_{x\in\dyadicU{K+1}{k}}\sum_{l=L}^\infty h_l(x;\coeffC)
        \leq
        -2^{-L-1}+\sum_{l=L+1}^\infty2^{-l-1}
        = 0.
    \end{align}
    Combined, this results in
    \begin{align}
        \norm{g(\cdot;\coeff)-g(\cdot;\coeffC)}[{L^\infty\left(\dyadicU{L_0}{j}\right)}]
        &\geq % we reduce the size of the set
        \norm{g(\cdot;\coeff)-g(\cdot;\coeffC)}[{L^\infty(\dyadicU{K+1}{k})}] % reduce the domain
        =
        \sup_{x\in\dyadicU{K+1}{k}} g(x;\coeff)-g(x;\coeffC)
        \\&
        =
        \sup_{x\in\dyadicU{K+1}{k}} g(x;\coeff)-g_{L-1}(x;\coeff)-\underbrace{(g(x;\coeffC)-g_{L-1}(x;\coeffC))}_{\leq 0}
        \\&
        \geq
        \sup_{x\in\dyadicU{K+1}{k}} g(x;\coeff)-g_{L-1}(x;\coeff) % replace terms with \coeffC by the supremum
        =2^{-K}.
    \end{align}
\end{proof}

Initially, we claimed that the class $\set{G}$ is non-trivial.
As a first step towards this claim, we now show a lower bound for all possible discretization schemes.

\begin{lemma}[General Discretization Bound]
\label[lemma]{thm:discretization_map_bound}
    Let $L,L_0\in\nn_0$, $j\in[2^{L_0}]$, and $\set{C}\subset L^\infty(\dyadicU{L_0}{j})$ with $\card{\set{C}}\leq 2^{2^{\max\{L-L_0,0\}+1}+\min\{L,L_0\}-1}$,
    then
    \begin{align}
    \sup_{g\in\RadHaarSpace}\norm{g-\Delta_\set{C}(g)}[{L^\infty(\dyadicU{L_0}{j})}]\geq 2^{-\max\{L,L_0-1\}-1}.
    \end{align}
\end{lemma}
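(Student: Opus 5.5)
The plan is a pigeonhole argument over the coefficients that can be seen on $\dyadicU{L_0}{j}$, combined with the triangle inequality and the separation bounds of \cref{thm:wavelet_distance_lower} and \cref{thm:truncation_error}. Throughout, set $K\defeq\max\{L,L_0-1\}$ and write $E\defeq 2^{\max\{L-L_0,0\}+1}+\min\{L,L_0\}-1$ for the exponent in the cardinality bound.

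\textbf{Step 1: count the visible coefficients.} I first count the coefficients of levels $0,\dots,L$ whose wavelets do not vanish identically on $\dyadicU{L_0}{j}$.
\begin{itemize}
\item For each level $l<L_0$ there is exactly one such wavelet, namely the one whose support contains the interval.
\item For each level $L_0\le l\le L$ there are $2^{l-L_0}$ such wavelets.
\end{itemize}
If $L\ge L_0$, the total is $L_0+2^{L-L_0+1}-1$. If $L<L_0$, the total is $L+1$. In both cases the count is exactly $E$. Since these coefficients are signs, the restrictions $g_L|_{\dyadicU{L_0}{j}}$ take exactly $2^E$ distinct values, one for each pattern $p\in\{\pm1\}^E$. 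I partition $\RadHaarSpace$ into the corresponding $2^E$ classes.

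\textbf{Step 2: the collision case.} Suppose some codeword $c\in\set{C}$ equals $\Delta_\set{C}(g)=\Delta_\set{C}(g')$ for two functions $g,g'$ with different patterns. Then $g$ and $g'$ first differ at some level $l\le L$ that is visible on the interval. By \cref{thm:wavelet_distance_lower}, $\norm{g-g'}[L^\infty(\dyadicU{L_0}{j})]\ge 2^{-\max\{l,L_0-1\}}\ge 2^{-K}$. The triangle inequality then gives $\max\{\norm{g-c},\norm{g'-c}\}\ge 2^{-K-1}$, which is the claim.

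\textbf{Step 3: the no-collision case.} Otherwise, distinct pattern classes are sent to distinct codewords. Since $\card{\set{C}}\le 2^E$, a naive pigeonhole yields no collision when equality holds, so this case needs a second argument; I expect it to be the main obstacle. The remedy is that each single class still contains two functions at distance $\ge 2^{-K}$.
\begin{itemize}
\item Fix $g$ in the class and let $g'$ be obtained by negating every coefficient of level $>L$. Then $g'-g_L=-(g-g_L)$, so $\norm{g-g'}=2\norm{g-g_L}$ on $\dyadicU{L_0}{j}$.
\item If $L\ge L_0$, \cref{thm:truncation_error} applied on any level-$(L+1)$ dyadic subset of $\dyadicU{L_0}{j}$ gives $\norm{g-g_L}\ge 2^{-L-1}$, hence $\norm{g-g'}\ge 2^{-L}=2^{-K}$.
\item If $L<L_0$, the difference $g-g_L$ on the interval is a constant coming from levels $L+1,\dots,L_0-1$, plus the tail $g-g_{L_0-1}$. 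I choose the coefficients of $g$ so that this constant is $\ge0$. By \cref{thm:truncation_error} at level $L_0-1$, the tail has essential supremum $2^{-L_0}$ on $\dyadicU{L_0}{j}$. Hence $\norm{g-g_L}\ge 2^{-L_0}$, and $\norm{g-g'}\ge 2^{-L_0+1}=2^{-K}$.
\end{itemize}
Since $g$ and $g'$ have the same pattern, they are mapped to the same codeword. The triangle inequality again gives error $\ge 2^{-K-1}$ for one of them.

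Steps 2 and 3 together show that in every case the supremum of the discretization error over $\RadHaarSpace$ is at least $2^{-\max\{L,L_0-1\}-1}$.
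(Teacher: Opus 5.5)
Your proposal is correct and is essentially the paper's argument. Both use the $2^E$ patterns of visible coefficients up to level $L$, separated by \cref{thm:wavelet_distance_lower}, plus one extra function that shares a pattern but has its tail flipped; this gives $2^E+1$ functions pairwise at distance at least $2^{-\max\{L,L_0-1\}}$ against at most $2^E$ codewords. The paper phrases this contrapositively and checks the same-pattern pair by evaluating at $x=0$ rather than via \cref{thm:truncation_error}.

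One line needs tightening: ``since $g$ and $g'$ have the same pattern, they are mapped to the same codeword.'' This is not automatic for a nearest-point map. In your no-collision case it holds only because the images of the $2^E$ pattern classes under $\Delta_{\mathcal{C}}$ are pairwise disjoint and nonempty, so each must be a single codeword when $\operatorname{card}(\mathcal{C})\le 2^E$. You should state this, or drop the case split and apply pigeonhole to the $2^E+1$ functions directly.
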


\begin{proof}
    Let $L,L_0\in\nn_0$ and $\set{C}\subset L^\infty(\dyadicU{L_0}{0})$ with $\card{\set{C}}<\infty$ such that
    \begin{align}
    \varepsilon\defeq \sup_{g\in\RadHaarSpace}\norm{g-\Delta_\set{C}(g)}[{L^\infty(\dyadicU{L_0}{0})}]< 2^{-\max\{L,L_0-1\}-1}.
    \end{align}
    We show that the assumption $\varepsilon<2^{-\max\{L,L_0-1\}-1}$ implies $\card{\set{C}}>2^{2^{\max\{L-L_0,0\}+1}+\min\{L,L_0\}-1}$.
    The generalization to the domain $\dyadicU{L_0}{j}$ for an arbitrary $j\in[2^{L_0}]$ follows immediately.

    For $L_0>0$ we observe that the realization map $g(\cdot;\coeff)$ results in the identical function for some different elements $\coeff\in\CoeffSpace$ when restricting the domain to $\dyadicU{L_0}{0}=[0,2^{-L_0})$.
    We see that $\supp{\wavelet[l][m]}\cap \dyadicU{L_0}{0}=\dyadicU{l}{m}\cap \dyadicU{L_0}{0}\neq \emptyset$ iff $l<L_0$ and $m=0$ or $l\geq L_0$ and $\sup \dyadicU{l}{m}=(m+1)2^{-l}\leq 2^{-L_0}=\sup \dyadicU{L_0}{0}$.
    For all $L_0\in\nn_0$ and $x\in \dyadicU{L_0}{0}$ we can thereby reduce the inner sum in the wavelet series and get
    \begin{align}
        g(x;\coeff)=\sum_{l=0}^\infty\sum_{m=0}^{\max\{0,2^{l-L_0}-1\}}\coeff[l][m]\wavelet[l][m](x),
    \end{align}
    which results in the reduced coefficient sets
    \begin{align}
        \CoeffSpace_{L_0}&\defeq \times_{l=0}^\infty\{\pm 1\}^{\max\{1,2^{l-L_0}\}}
        \qquad\text{and}\qquad
        \CoeffSpace[L]_{L_0}\defeq \times_{l=0}^L\{\pm 1\}^{\max\{1,2^{l-L_0}\}}.
    \end{align}
    The space $\CoeffSpace[L]_{L_0}$ can then be identified with $\{\pm 1\}^d$ where 
    \begin{align}
        d=\sum_{l=0}^L\max\{1,2^{l-L_0}\}=\min\{L,L_0\}+2^{\max\{L-L_0,0\}+1}-1.
    \end{align}
    That is, it contains $2^d$ elements.
    Let $\coeff*_k, k\in[2^d]$ be an enumeration of $\CoeffSpace[L]_{L_0}$.
    For each $k\in[2^d]$ we select an extension $\coeff_k\in\CoeffSpace_{L_0}$ such that $\coeff[l][k,i]=\coeff*[l][k,i]$ for $l\in[L+1],\,i\in[\max\{1,2^{l-L_0}\}]$ and $\coeff[l][k,i]=1$ for $l>L,\,i\in[\max\{1,2^{l-L_0}\}]$.
    Then with \cref{thm:wavelet_distance_lower}, $\norm{g(\cdot,\coeff_{k_1})-g(\cdot,\coeff_{k_2})}[{L^\infty(\dyadicU{L_0}{0})}]\geq 2^{-\max\{L,L_0-1\}}>2\varepsilon$ for $k_1,k_2\in[2^d]$ with $k_1\neq k_2$.
    Furthermore, we select $\coeffC_0\in\CoeffSpace_{L_0}$ such that $\coeffC[l][0,i]=\coeff[l][0,i]$ for $l\in[L+1],\,i\in[\max\{1,2^{l-L_0}\}]$ and $\coeffC[l][0,i]=-1$ for $l>L,\,i\in[\max\{1,2^{l-L_0}\}]$.
    Then again with \cref{thm:wavelet_distance_lower}, $\norm{g(\cdot;\coeff_k)-g(\cdot;\coeffC_0)}[{L^\infty(\dyadicU{L_0}{0})}]\geq 2^{-\max\{L,L_0-1\}}>2\varepsilon$ for $k\neq 0$ and additionally
    \begin{align}
        \norm{g(\cdot;\coeff_0)-g(\cdot;\coeffC_0)}[{L^\infty(\dyadicU{L_0}{0})}]
        &\geq
        \abs{g(0;\coeff_0)-g(0;\coeffC_0)}
        =
        \abs{\sum_{l=0}^\infty \left(\coeff[l][0,0]-\coeffC[l][0,0]\right)2^{-l-1}}
        \\&
        =
        \abs{\sum_{l=L+1}^\infty (1-(-1))2^{-l-1}}
        =
        \sum_{l=L+1}^\infty 2^{-l}
        =2^{-L}>2\varepsilon.
    \end{align}
    This means that $\Delta_{\set{C}}(\coeff_{k_1})\neq \Delta_{\set{C}}(\coeff_{k_2})$ for $k_1,k_2\in[2^d],\,k_1\neq k_2$ and $\Delta_{\set{C}}(\coeffC)\neq\Delta_{\set{C}}(\coeff_{k})$ for any $k\in[2^d]$.
    Consequently 
    $
    \card{\mathcal{C}}
    \geq
    \card{\operatorname{range}(\Delta_\set{C}|_{\RadHaarSpace})}
    \geq 2^d+1>2^d$.
\end{proof}
Based on the lower bound we next show that truncation of the series is indeed the optimal discretization map among all possible discretization maps in $L^\infty([0,1))$ for a specific sequence of bits.
We show this optimality by showing that truncation achieves the lower bound from before.
\begin{theorem}[Truncation Optimality]
\label{thm:optimal_codec}
    Let $L_0\in\nn_0$, $L\in\nn_{-1}$ with $L\geq L_0-1$, $j\in[2^{L_0}]$, and $B=2^{\max\{L-L_0,0\}+1}+\min\{L,L_0\}-1$.
    Then the $L$-term truncation $\Delta_{\RadHaarSpace[L]}$ is an optimal $B$-bit discretization map for the class of Rademacher-Haar wavelet series w.r.t. $L^\infty(\dyadicU{L_0}{j})$ in the sense that it achieves the lower bound in \cref{thm:discretization_map_bound}, that is,
    \begin{align}
        \sup_{g\in\RadHaarSpace}\norm{g-\Delta_{\RadHaarSpace[L]}}[{L^\infty(\dyadicU{L_0}{j})}]=2^{-\max\{L,L_0-1\}-1}.
    \end{align}
\end{theorem}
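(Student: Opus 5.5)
The plan is to prove the two inequalities separately: the upper bound comes from \cref{thm:truncation_error}, and the matching lower bound from \cref{thm:discretization_map_bound}. Since $L\geq L_0-1$, we have $\max\{L,L_0-1\}=L$, so the target value is $2^{-L-1}$.

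First I will check that $\Delta_{\RadHaarSpace[L]}$, restricted to $\dyadicU{L_0}{j}$, is a $B$-bit discretization map. For this I reuse the reduction in the proof of \cref{thm:discretization_map_bound}. On $\dyadicU{L_0}{j}$ only the wavelets with $l<L_0$ and the unique $m$ whose support contains $\dyadicU{L_0}{j}$ contribute, together with those with $l\geq L_0$ and support inside $\dyadicU{L_0}{j}$. Hence $\RadHaarSpace[L]$ restricted to $\dyadicU{L_0}{j}$ is parametrized by $\{\pm1\}^d$ with $d=\min\{L,L_0\}+2^{\max\{L-L_0,0\}+1}-1=B$.

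Distinct reduced coefficient vectors give distinct restrictions. This follows from \cref{thm:wavelet_distance_lower}, or directly from the fact that $\wavelet[l][m]$ are linearly independent on their supports. So the range has exactly $2^B$ elements.

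Upper bound: for any $g\in\RadHaarSpace$, the truncation $g_L$ lies in $\RadHaarSpace[L]$. Hence the projective map satisfies $\norm{g-\Delta_{\RadHaarSpace[L]}(g)}[L^\infty(\dyadicU{L_0}{j})]\leq\norm{g-g_L}[L^\infty(\dyadicU{L_0}{j})]$. This holds regardless of how ties are resolved, since the argmin can only do at least as well as $g_L$. Now $\dyadicU{L_0}{j}$ is a union of dyadic sets $\dyadicU{L+1}{i}$ because $L+1\geq L_0$. On each such set \cref{thm:truncation_error} gives $\operatorname{ess\,sup}\abs{g-g_L}=2^{-L-1}$. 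Therefore the supremum over $g$ is at most $2^{-L-1}$.

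Lower bound for $L\geq0$: apply \cref{thm:discretization_map_bound} with $\set{C}$ the restriction of $\RadHaarSpace[L]$, which has cardinality $2^B$. This yields $\sup_g\norm{g-\Delta_\set{C}(g)}\geq 2^{-L-1}$, and equality follows.

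The only case needing care, and the step I expect to be the (mild) obstacle, is $L=-1$. It forces $L_0=0$ and $B=0$, and it is not covered by \cref{thm:discretization_map_bound}. There $\set{C}=\{0\}$, so the upper bound is $\norm{g}[\infty]\leq\sum_l 2^{-l-1}=1$. For the lower bound I will argue directly, as in the last step of \cref{thm:discretization_map_bound}. The all-$(+1)$ and all-$(-1)$ series take the values $\pm1$ near $0$ in the essential sense, again by \cref{thm:truncation_error}. So they are at distance $2$ from each other, and any single center has error at least $1=2^{0}$. A secondary point to state carefully is that ties in the argmin do not affect the upper bound, since we only compare against the particular candidate $g_L$.
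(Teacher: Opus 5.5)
Your proposal is correct and is essentially the paper's argument: the value $2^{-L-1}$ comes from \cref{thm:truncation_error} applied on the sets $\dyadicU{L+1}{i}$ tiling $\dyadicU{L_0}{j}$, and optimality is the matching lower bound of \cref{thm:discretization_map_bound}. Your extra bookkeeping (the bit count, ties in the projective map, and the case $L=-1$, where $\set{C}=\{0\}$ and \cref{thm:truncation_error} already gives $\|g\|_{L^\infty(\dyadicU{0}{0})}=1$ for every $g$) makes explicit what the paper leaves implicit. One small correction: linear independence does not show that distinct coefficient vectors give distinct restrictions, because every wavelet of level $l<L_0$ restricts to a constant on $\dyadicU{L_0}{j}$; your alternative justification via \cref{thm:wavelet_distance_lower}, after extending both truncations by a common tail, does work.
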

\begin{proof}
    As in \cref{thm:discretization_map_bound} we restrict to $\dyadicU{L_0}{j}$ with $j=0$ and only consider the reduced coefficient sets $\CoeffSpace_{L_0}$ and $\CoeffSpace[L]_{L_0}$
    For an arbitrary $\coeff\in\CoeffSpace_{L_0}$ 
    it follows immediately from \cref{thm:truncation_error} for $L\geq L_0-1$ that
    \begin{align}
        \norm{g(\cdot;\coeff)-g_L(\cdot;\coeff)}[{L^\infty(\dyadicU{L_0}{0})}]=
        \max_{i\in[2^{L-L_0+1}]}\norm{g(\cdot;\coeff)-g_L(\cdot;\coeff)}[{L^\infty(\dyadicU{L+1}{i})}]
        =
        2^{-L-1}.
    \end{align}
\end{proof}

Note that this optimality only holds if the number of bits is a certain power of two.
In the next result we extend this to arbitrary $B\in \nn$ and also develop a purely polynomial rate bound.

\begin{theorem}[General Rate Bounds]
    Let $L_0\in\nn_0$ and $j\in[2^{L_0}]$.
    Then, for any
    $B\geq L_0$ 
    \begin{align}
        \frac{2^{-L_0-1}}{B-L_0+1}
        \leq
        \inf_{\substack{\set{C}\subset L^\infty(\dyadicU{L_0}{j})\\\card{C}=2^B}}
        \sup_{g\in\RadHaarSpace}
        \norm{g-\Delta_\set{C}(g)}[L^\infty(\dyadicU{L_0}{j})]
        \leq
        \frac{2^{-L_0+1}}{B-L_0+1},
    \end{align}
    and furthermore for any $B\geq \max\{1,L_0\}$
    \begin{align}
        \frac{2^{-L_0-1}}{\max\{2-L_0,1\}}\frac{1}{B}
        \leq
        \inf_{\substack{\set{C}\subset L^\infty(\dyadicU{L_0}{j})\\\card{C}=2^B}}
        \sup_{g\in\RadHaarSpace}
        \norm{g-\Delta_\set{C}(g)}[L^\infty(\dyadicU{L_0}{j})]
        \leq
        \frac{\max\{1,L_0\}2^{-L_0+1}}{B}.
    \end{align}
\end{theorem}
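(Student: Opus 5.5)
The plan is to first compute the minimax distortion exactly on the sequence of ``admissible'' bit budgets from \cref{thm:optimal_codec}, and then to interpolate to arbitrary $B$ using monotonicity in $B$ together with the lower bound of \cref{thm:discretization_map_bound}. For $L\geq L_0-1$ write $B_L\defeq 2^{\max\{L-L_0,0\}+1}+\min\{L,L_0\}-1$. One checks that $B_{L_0-1}=L_0$ and that $B_L-L_0+1=2^{L-L_0+1}$ for all $L\geq L_0-1$. Hence $B_{L+1}-L_0+1=2(B_L-L_0+1)$, and the sequence $(B_L)_{L\geq L_0-1}$ is strictly increasing, starts at $L_0$ and is unbounded. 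Consequently every $B\geq L_0$ lies in a unique window $B_L\leq B<B_{L+1}$. By \cref{thm:optimal_codec} (upper) and \cref{thm:discretization_map_bound} (lower) the minimax distortion at $B_L$ equals
\begin{align}
2^{-L-1}=\frac{2^{-L_0}}{B_L-L_0+1}.
\end{align}

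For the upper bound, I use that the infimum $D(B)$ over sets with $\card{\set{C}}=2^B$ is nonincreasing in $B$. Indeed, adding distinct extra points to $\set{C}$ cannot increase $\norm{g-\Delta_{\set{C}}(g)}$, since $\Delta_\set{C}$ is a nearest-point map. Hence $D(B)\leq D(B_L)=2^{-L_0}/(B_L-L_0+1)$. Since $B-L_0+1<B_{L+1}-L_0+1=2(B_L-L_0+1)$, this gives $D(B)\leq 2^{-L_0+1}/(B-L_0+1)$. For the lower bound, $\card{\set{C}}=2^B\leq 2^{B_{L+1}}$, so \cref{thm:discretization_map_bound} applied with level $L+1\geq L_0\geq 0$ (admissible since $L+1\in\nn_0$) yields $D(B)\geq 2^{-(L+1)-1}=2^{-L-2}$. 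Using $B-L_0+1\geq B_L-L_0+1=2^{L-L_0+1}$, we get
\begin{align}
2^{-L-2}=\frac{2^{-L_0-1}}{2^{L-L_0+1}}\geq\frac{2^{-L_0-1}}{B-L_0+1}.
\end{align}
This proves the first pair of inequalities.

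The second pair follows from elementary comparisons between $B-L_0+1$ and $B$, valid for $B\geq\max\{1,L_0\}$. For the upper bound I need $B\leq\max\{1,L_0\}(B-L_0+1)$. For $L_0\geq1$ this is equivalent to $(B-L_0)(L_0-1)\geq0$, and for $L_0=0$ it is trivial. For the lower bound I need $B-L_0+1\leq\max\{2-L_0,1\}B$. For $L_0\geq1$ this is immediate, and for $L_0=0$ it reads $B+1\leq 2B$, i.e.\ $B\geq1$. Substituting these into the first pair of bounds gives the claim.

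The main obstacle is bookkeeping rather than depth. The key points are verifying the identity $B_L-L_0+1=2^{L-L_0+1}$ across the boundary case $L=L_0-1$, where $\max\{L-L_0,0\}=0$ and $\min\{L,L_0\}=L$, and confirming that \cref{thm:discretization_map_bound} is invoked with a nonnegative level and the correct cardinality threshold. The monotonicity of $D(B)$ under the exact-cardinality constraint $\card{\set{C}}=2^B$ also needs a one-line justification, namely padding $\set{C}$ with distinct extra points.
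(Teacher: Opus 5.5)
Your proof is correct and follows essentially the same route as the paper: the exact value $2^{-L-1}=2^{-L_0}/(B_L-L_0+1)$ at the admissible budgets from truncation optimality, monotonicity in $B$ to round down for the upper bound, and then the elementary comparisons of $B-L_0+1$ with $B$ for the second pair of bounds. The only variation is that you obtain the lower bound by applying the general discretization lemma directly at level $L+1$, rather than through monotonicity and the exact value at the rounded-up budget $\overline{B}$; this amounts to the same thing and also avoids ever needing that lemma at level $L=-1$.
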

\begin{proof}
    Let $L\in\nn_0$, $L\geq L_0-1$, $j\in[2^{L_0}]$ arbitrary and set $\set{X}\defeq L^\infty(\dyadicU{L_0}{j})$.
    From \cref{thm:optimal_codec} we know the precise error for $L$-level truncation as a $B$-bit discretization map with $B=2^{\max\{L-L_0,0\}+1}+\min\{L,L_0\}-1$ bits.
    We first unify the notation:
    For $L\geq L_0$
    \begin{align}
        B
        &=
        2^{L-L_0+1}+L_0-1
    \intertext{and for $L=L_0-1$}
        B&=2^1+L_0-1-1=L_0=2^{L-L_0+1}+L_0-1.
    \end{align}
    Therefore, for $L\geq L_0-1$ we can use the same expression for $B$ as a function of $L$ and also get the inverse $L=\ld(B-L_0+1)+L_0-1$.
    For every $B\in\nn{}$ such that $L\in\nn_{-1}$ with $L\geq L_0-1$ we can then insert this into the expression for the error of \cref{thm:optimal_codec} and get
    \begin{align}
        \sup_{g\in\RadHaarSpace}\norm{g-\Delta_{\RadHaarSpace[L]}}[\set{X}]=2^{-L-1}=2^{-\ld(B-L_0+1)-L_0}=\frac{2^{-L_0}}{B-L_0+1}.
    \end{align}

    The smallest possible $B$ to fulfill the above criteria is $B=L_0$.
    For general $B\geq L_0$, we know that the approximation error is a positive and monotonically decreasing sequence.
    For $B$ such that the corresponding $L$ is non-integer, we can therefore upper bound the error of the unknown optimal discretization map $\Delta_\mathcal{C}$ by rounding down to the next smaller integer $L$ and taking the error of the $L$-level truncation as upper bound.
    For general $B\in\nn$ with $B\geq L_0$, this is then
    \begin{align}
        \underline{L}&\defeq \lfloor \ld(B-L_0+1)+L_0-1\rfloor\qquad\text{and}\qquad
        \underline{B}\defeq 2^{\underline{L}-L_0+1}+L_0-1.
    \end{align}
    With monotonicity we get
    \begin{align}
        \inf_{\substack{\set{C}\subset \set{X}\\\card{C}=2^B}}
        \sup_{g\in\RadHaarSpace}
        \norm{g-\Delta_\set{C}}[\set{X}]
        &\leq
        \inf_{\substack{\set{C}\subset \set{X}\\\card{C}=2^{\underline{B}}}}
        \sup_{g\in\RadHaarSpace}\norm{g-\Delta_\set{C}}[\set{X}]
        \\&
        =
        \sup_{g\in\RadHaarSpace}\norm{g-\Delta_{\RadHaarSpace[\underline{L}]}}[\set{X}]
        =
        2^{-\underline{L}-1}
        =
        2^{-\lfloor\ld(B-L_0+1)+L_0-1\rfloor-1}
        \\&
        \leq 
        2^{-(\ld(B-L_0+1)+L_0-2)-1}
        =
        \frac{2^{-L_0+1}}{B-L_0+1}.
    \end{align}
    Similarly, with
    \begin{align}
        \overline{L}&\defeq \lceil \ld(B-L_0+1)+L_0-1\rceil,\qquad\text{and}\qquad
        \overline{B}\defeq 2^{\overline{L}-L_0+1}+L_0-1,
    \end{align}
    and monotonicity we get
    \begin{align}
        \inf_{\substack{\set{C}\subset \set{X}\\\card{C}=2^B}}
        \sup_{g\in\RadHaarSpace}
        \norm{g-\Delta_\set{C}}[\set{X}]
        &\geq
        \inf_{\substack{\set{C}\subset \set{X}\\\card{C}=2^{\overline{B}}}}
        \sup_{g\in\RadHaarSpace}\norm{g-\Delta_\set{C}}[\set{X}]
        \\&
        =
        \sup_{g\in\RadHaarSpace}\norm{g-\Delta_{\RadHaarSpace[\overline{L}]}}[\set{X}]
        =
        2^{-\overline{L}-1}
        =
        2^{-\lceil\ld(B-L_0+1)+L_0-1\rceil-1}
        \\&
        \geq 
        2^{-(\ld(B-L_0+1)+L_0)-1}
        =
        \frac{2^{-L_0-1}}{B-L_0+1}.
    \end{align}

    By excluding the possibility of $B=0$, we can further develop (loosen) the bounds by using
    \begin{align}
        \frac{1}{\max\{2-L_0,1\}}\frac{1}{B}
        \leq
        \frac{1}{B-L_0+1}
        \leq
        \frac{\max\{1,L_0\}}{B}
    \end{align}
    for all $B\geq \max\{1,L_0\}$.
\end{proof}

Next, we show that the Sawtooth-Walsh functions form an orthogonal feature map for the Rademacher-Haar wavelet series and that the composition can be equivalently written as a multiplicative structure.

\begin{lemma}
\label[lemma]{thm:walsh_modes}
    Let $L\in \nn_0$, $g\in\RadHaarSpace$, and define $f_i\defeq \saw{L}\cdot\walsh{L}{i}$ for all $i\in[2^L]$.
    Then for $i,j\in[2^L]$ we get
    a simplification of the composition with Walsh functions
    \begin{align}
        \norm{\walsh{L}{i}\circ g-\walsh{L}{i}\circ g_{L-1}}[{L^\infty(\dyadicU{0}{0})}]=0,
    \end{align}
    the equivalent multiplicative representation
    \begin{align}
        \norm{f_i\circ g - \left(2^{L}\sqrt{3}(g-g_{L-1})\right)\cdot(\omega_i\circ g_{L-1})}[{L^\infty(\dyadicU{0}{0})}]=0,
    \end{align}
    and the orthogonality $\innerProd{f_i\circ g}{f_j\circ g}=\delta_{i=j}$.
\end{lemma}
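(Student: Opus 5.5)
The plan is to reduce everything to the local structure of $g$ on the dyadic intervals $\dyadicU{L}{m}$, $m\in[2^L]$. The levels $l\leq L-1$ of the Haar series are constant on each such interval, so $g_{L-1}$ is piecewise constant there, say $g_{L-1}\equiv c_m$ on $\dyadicU{L}{m}$. Here $c_m=\sum_{l=0}^{L-1}\pm 2^{-l-1}$, with the signs fixed by the coefficients along the path in the dyadic tree leading to $m$.

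\emph{Step 1 (range of $g_{L-1}$).} Every signed sum $\sum_{l=0}^{L-1}\pm2^{-l-1}$ is an odd multiple of $2^{-L}$ in $(-1,1)$. There are exactly $2^L$ such sign patterns and $2^L$ such odd multiples, and distinct sign patterns give distinct sums (signed binary expansion). So $m\mapsto c_m$ is a bijection onto $\{(2k+1)2^{-L}-1 : k\in[2^L]\}$. Each $c_m$ is the midpoint of exactly one stretched dyadic interval $\dyadicV{L}{k}$, which has length $2^{-L+1}$.

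\emph{Step 2 (tail stays in the cell).} By \cref{thm:truncation_error} with $L-1$ in place of $L$, the tail satisfies $-2^{-L}\leq g-g_{L-1}\leq 2^{-L}$. The extreme values are attained at most on a null set, since attaining them requires all subsequent contributions to have the same sign. Hence for a.e.\ $x\in\dyadicU{L}{m}$ the value $g(x)$ lies in the same cell $\dyadicV{L}{k}$ as $c_m$.
\begin{itemize}
\item Since $\walsh{L}{i}$ is constant on $\dyadicV{L}{k}$ (it is the stretch of $\walsh*{L}{i}$, which is constant on $\dyadicU{L}{k}$), we get $\walsh{L}{i}\circ g=\walsh{L}{i}\circ g_{L-1}$ a.e. This is the first claim.
\item For the sawtooth, write $g=c_m+t$ with $\abs{t}<2^{-L}$. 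Then $2^{L}(c_m+t+1)/2=k+\tfrac12+2^{L-1}t$ with $k\in\nn_0$ and $2^{L-1}t\in(-\tfrac12,\tfrac12)$. Therefore $\psi_0(\cdot)=2^{L-1}t$, and so $\saw{L}(g)=2^L\sqrt3\,(g-g_{L-1})$ a.e.
\item Multiplying the two identities gives the multiplicative representation.
\end{itemize}

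\emph{Step 3 (orthogonality).} Using the multiplicative form, split the integral over the cells $\dyadicU{L}{m}$:
\begin{align}
\innerProd{f_i\circ g}{f_j\circ g}=3\cdot 4^L\sum_{m\in[2^L]}\walsh{L}{i}(c_m)\walsh{L}{j}(c_m)\int_{\dyadicU{L}{m}}(g-g_{L-1})^2\,\dee x .
\end{align}
On each cell the tail is a rescaled Rademacher-Haar series: $(g-g_{L-1})(x)=2^{-L}\tilde g(2^Lx-m)$ for some $\tilde g\in\RadHaarSpace$. By $L^2$-orthogonality of the Haar system and $\abs{\coeff[k][i]}=1$, we have $\int_0^1\tilde g^2=\sum_k 2^k4^{-k-1}=\tfrac12$, independently of the signs. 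So every cell contributes the same factor $2^{-3L}/2$.

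The remaining sum is $\sum_m\walsh{L}{i}(c_m)\walsh{L}{j}(c_m)$. By the bijection of Step 1 it equals $\sum_k\walshVec{L}{i}[k]\walshVec{L}{j}[k]=2^L\delta_{i=j}$, because the columns of the Hadamard matrix $\walshMat{L}$ are orthogonal. This gives orthogonality, with diagonal value $3\cdot4^L\cdot2^{-3L-1}\cdot2^L=\tfrac32$. So as written, $\innerProd{f_i\circ g}{f_j\circ g}=\tfrac32\,\delta_{i=j}$ rather than $\delta_{i=j}$. The orthogonality itself is unaffected, but the stated normalization is off by this factor and would require rescaling $\saw{L}$ by $\sqrt{2/3}$; I would flag this rather than try to recover exactly $1$. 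The sanity check $L=0$ confirms it: $f_0\circ g=\sqrt3\,g$ and $\int 3g^2=\tfrac32$.

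\emph{Main obstacle.} The delicate point is Step 2, the a.e.\ containment of $g$ in the correct half-open cell at the boundary values $t=\pm2^{-L}$. I expect to handle it by showing that the set where the tail equals $\pm2^{-L}$ is null, since it is the limit point of the nested up/down index sequences constructed in the proof of \cref{thm:truncation_error}. Since all claims are stated in $L^\infty$ or $L^2$, this suffices.
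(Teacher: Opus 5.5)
Your route is the same as the paper's: the bijection of the cell values $c_m$ of $g_{L-1}$ onto the midpoints of the cells $\dyadicV{L}{k}$, confinement of $g$ to the matching cell up to a null set, constancy of $\walsh{L}{i}$ on that cell, the explicit sawtooth computation, and finally Haar orthogonality combined with orthogonality of the Hadamard columns. Steps 1--2 are correct.

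\textbf{The error is in Step 3.} Your value $\int_0^1\tilde g^2=\tfrac12$ is wrong because you dropped the support length of the level-$k$ atoms. Each atom satisfies $\|\wavelet[k][i]\|_{L^2}^2=4^{-k-1}\cdot 2^{-k}$, so
\[
\int_0^1\tilde g^2=\sum_{k\geq 0}2^k\cdot 4^{-k-1}\cdot 2^{-k}=\sum_{k\geq 0}4^{-k-1}=\tfrac13 .
\]
Each cell therefore contributes $2^{-3L}/3$, not $2^{-3L}/2$. The diagonal entry is then $3\cdot 4^L\cdot 2^L\cdot 2^{-3L}/3=1$, which is exactly $\delta_{i=j}$ as stated. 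The paper obtains the same value via $\abs{\dyadicU{L}{m}}\sum_{l\geq L}4^{-l-1}=2^{-L}4^{-L}/3$.

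Your $L=0$ sanity check fails for the same reason. For the all-$(+1)$ realization, $g(x)=1-2x$ and $\int_0^1 3(1-2x)^2\,\dee x=1$, not $\tfrac32$.

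There is also a structural way to see the normalization. The cell-permutation property from your Steps 1--2 holds at every level, so $g$ pushes Lebesgue measure on $[0,1)$ forward to $\tfrac12$ times Lebesgue measure on $[-1,1)$. Hence
\[
\|f_i\circ g\|_{L^2}^2=\tfrac12\int_{-1}^1(\saw{L})^2=12\int_0^1\psi_0(2^Lu)^2\,\dee u=1 .
\]
The constant $2\sqrt3$ in $\saw{L}$ is chosen precisely for this. You should withdraw the claim that the normalization is off and that $\saw{L}$ needs rescaling by $\sqrt{2/3}$. With the corrected integral, your argument proves the lemma exactly as stated.
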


\begin{proof}
We first show that any realization of the Rademacher-Haar wavelet series is essentially a permutation of dyadic sets.    
For every $i\in[2^L]$ we get that $g_{L-1}|_{\dyadicU{L}{i}}$ is constant and overall
\begin{align}
    \operatorname{range}(g_{L-1})
    =
    \left\{\sum_{l=0}^{L-1}\pm 2^{-l-1}\right\}
    =
    \{-1+(2j+1)2^{-L}|j\in[2^L]\}.
\end{align}
As a property of the Rademacher-Haar wavelet series, we get that $g_{L-1}(x)=g_{L-1}(y)$ for $x\in\dyadicU{L}{i}$ and $y\in\dyadicU{L}{j}$ only if $i=j$.
That is, there is a permutation $\sigma_i$ of $[2^L]$ such that for all $i\in[2^L]$ and $x\in\dyadicU{L}{i}$ we have $g_{L-1}(x)=-1+(2\sigma_i+1)2^{-L}$.
\cref{thm:truncation_error} then implies\footnote{Note that \cref{thm:truncation_error} does not tell us whether $g(x)-g_{L-1}(x)$ or $g_{L-1}(x)-g(x)$ actually attain the supremum.
As the effects of $\saw{L}$ and $\walsh{L}{j}$ are limited to the half-open intervals $\dyadicV{L}{j}$, we have to exclude points $x$ for which $g(x)-g_{L-1}(x)$ attains the supremum from our analysis.
As an example, consider the realization with all coefficients being $+1$ and the realization with all coefficients being $-1$.
In the former case, the resulting function is $1-2x$ and in the latter $2x-1$.
On each $\dyadicU{L}{i}$, the former attains only the supremum of $g_{L-1}(x)-g(x)$, whereas the latter attains the supremum of $g(x)-g_{L-1}(x)$.}
\begin{align}
    g(x)
    &\in
    \left[g_{L-1}(x)-2^{-L},g_{L-1}(x)+2^{-L}\right]
    \\&
    =
    \left[-1+2\sigma_i2^{-L},-1+(2\sigma_i+2)2^{-L}\right]
    =
    2^{-L+1}[\sigma_i,\sigma_i+1]-1
    \\&
    =
    \closure{\dyadicV{L}{\sigma_i}}.
\end{align}

Note that $g(x)=\sup \dyadicV{L}{\sigma_i}=g(y)$ only if $x=y$ and therefore $g$ only attains the supremum on a null-set.
As the (stretched) Walsh functions $\walsh{L}{i}$ are constant on every dyadic set $\dyadicV{L}{j}$ this means
\begin{align}
    \norm{\walsh{L}{i}(g(x))-\walsh{L}{i}(g_{L-1}(x))}[{L^\infty(\dyadicU{L}{j})}]=0
\end{align}
for all $i,j\in[2^L]$, which proves the first result.

Now let $j\in[2^L]$.
For any $y\in\dyadicV{L}{j}$ we then see with $M\defeq 2^L$ that $\lfloor M(y+1)/2\rfloor=j$ and therefore for $x\in\dyadicU{L}{i}$ such that $g(x)\in\dyadicV{L}{j}$ we have
\begin{align}
    \saw{L}(g(x))
    &=
    2\sqrt{3}(M(g(x)+1)/2-\lfloor M(g(x)+1)/2\rfloor-1/2)
    \\&
    =
    2\sqrt{3}(Mg(x)/2+M/2-j-1/2)
    \\&
    =
    2^{L}\sqrt{3}(g(x)-(-1+(2j+1)2^{-L}))
    \\&
    =
    2^{L}\sqrt{3}(g(x)-g_{L-1}(x)).
\end{align}
For $x\in\dyadicU{0}{0}$ such that $g(x)\in\dyadicV{L}{j}$ for some $j\in[M]$, the composition $F_i=f_i\circ g$ is then $F_i(x)=(2^{L}\sqrt{3}(g(x)-g_{L-1}(x)))\cdot(\omega_i\circ g_{L-1})$.
Note that the assumption $g(x)\in\dyadicV{L}{j}$ excludes the points $x$ for which $g(x)-g_{L-1}(x)=2^{-L}$ from our analysis, however, as this is only a finite set, it does not influence the essential supremum and thereby also not the approximation bounds in $L^\infty$.

For the orthogonality, we first observe that for $x\in\dyadicU{L}{i}$
\begin{align}
    \walsh{L}{j}(g_{L-1}(x))
    =
    \walsh{L}{j}(2^{-L+1}\sigma_i-1)
    =
    \walshVec{L}{j}[\sigma_i].
\end{align}
We can now use the almost everywhere equivalence from above and the orthogonality of the elements of the wavelet series and get
\begin{align}
    \innerProd{f_i\circ g}{f_j\circ g}
    &=
    \innerProd{2^L\sqrt{3}(g-g_{L-1})\cdot (\walsh{L}{i}\circ g_{L-1})}{2^L\sqrt{3}(g-g_{L-1})\cdot (\walsh{L}{j}\circ g_{L-1})}
    \\&
    =3\cdot 2^{2L}\innerProd{(g-g_{L-1})^2\cdot(\walsh{L}{i}\circ g_{L-1})}{\walsh{L}{j}\circ g_{L-1}}
    \\&
    =
    3\cdot 2^{2L}\sum_{m=0}^{2^L-1}\walshVec{L}{i,\sigma_m}\walshVec{L}{j,\sigma_m}
    \int_{\dyadicU{L}{m}}\left(\sum_{l=L}^\infty\sum_{k=0}^{2^l-1}2^{-l-1}\coeff[l][k]\wavelet(2^lx-k)\right)^2\dee x
    \\&
    =
    3\cdot 2^{2L}\sum_{m=0}^{2^L-1}\walshVec{L}{i,\sigma_m}\walshVec{L}{j,\sigma_m}
    \int_{\dyadicU{L}{m}}\sum_{l=L}^\infty\sum_{k=0}^{2^l-1}\left(2^{-l-1}\right)^2\chi_{\dyadicU{l}{k}}(x)\dee x
    \\&
    =
    3\cdot 2^{2L}\sum_{m=0}^{2^L-1}\walshVec{L}{i,\sigma_m}\walshVec{L}{j,\sigma_m}
    \abs{\dyadicU{L}{m}}2^{-2L-2}\sum_{l=0}^\infty 4^{-l}
    \\&
    =
    \delta_{ij}.
\end{align}
\end{proof}

\subsection{Bounds for Separate and Joint Approximation}
\label{sec:sep_joint_bounds}
\begin{theorem}[Complexity of Separate Approximation]
    \label{thm:wavelet_separated}
    Let $\set{G}$ be the class of Rademacher-Haar wavelets, $M\in\nn$, $n\in\nn[M]$, $L=\lceil\ld(M)\rceil$, $(f_0,\ldots,f_{M-1})$ be the first $M$ Walsh-Sawtooth functions of order $L$, and $\set{F}\defeq \{(f_0\circ g,\ldots,f_{M-1}\circ g)|g\in\set{G}\}$.
    Then, with $N=\min_{i\in[M]}n_i$,
    \begin{align}
        \inf_{(\encoder,\decoder)\in\sepClass[n]}
        \sup_{h\in\set{F}}
        \norm{h-\decoder\circ\encoder(h)}[{L^\infty([0,1];\rr[M])}]
        \geq
        \sqrt{3}
        2^{L-\lceil\ld(N+2^{L})\rceil}
        \geq
        \frac{2^{L-1}\sqrt{3}}{N+2^{L}}.
    \end{align}
\end{theorem}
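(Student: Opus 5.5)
The plan is to reduce to a single coordinate and then run a packing (pigeonhole) argument in the spirit of \cref{thm:discretization_map_bound}. The separate-coding error in $L^\infty([0,1];\rr[M])$ is at least the error of each coordinate. Since a separate codec encodes coordinate $i$ only through $(\encoder<i>[n_i],\decoder<i>[n_i])$ acting on the marginal class $\set{F}_i=\{f_i\circ g\mid g\in\set{G}\}$, it suffices to prove the bound for one index $i$ with $n_i=N=\min_i n_i$. I then only need
\begin{align}
\inf_{(\encoder,\decoder)\in\codecClass{\set{F}_i}{L^\infty([0,1])}[N]}\sup_{h\in\set{F}_i}\norm{h-\decoder\circ\encoder(h)}[L^\infty]\geq\sqrt3\,2^{L-\lceil\ld(N+2^L)\rceil}.
\end{align}

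\textbf{Step 1: multiplicative form and a subclass.} By \cref{thm:walsh_modes}, almost everywhere
\begin{align}
f_i\circ g=2^L\sqrt3\,(g-g_{L-1})\cdot(\walsh{L}{i}\circ g_{L-1}).
\end{align}
I fix all coefficients of levels $0,\ldots,L-1$, for instance all equal to $+1$. Then $\walsh{L}{i}\circ g_{L-1}$ is a fixed $\pm1$-valued function $s$, constant on each $\dyadicU{L}{m}$, and the subclass consists of the functions
\begin{align}
s\cdot 2^L\sqrt3\sum_{l\geq L}h_l(\cdot;\coeff),
\end{align}
with the coefficients of levels $\geq L$ free. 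Multiplication by the fixed sign pattern $s$ is an isometry of $L^\infty$, so distances in this subclass are exactly $2^L\sqrt3$ times distances between the corresponding full series $g(\cdot;\coeff)$. These series share their levels $<L$, and $g_{L-1}$ cancels in the differences.

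\textbf{Step 2: packing.} Let $K\defeq\lceil\ld(N+2^L)\rceil-1$, so that the number of free coefficients in levels $L,\ldots,K$ is
\begin{align}
d=\sum_{l=L}^{K}2^l=2^{K+1}-2^L\geq N.
\end{align}
As in the proof of \cref{thm:discretization_map_bound}, I enumerate the $2^d$ sign patterns on levels $L,\ldots,K$ and extend each by $+1$ on all levels $>K$. I add one further element, obtained from the first pattern by extending with $-1$ on all levels $>K$. Two distinct patterns first differ at some level $l\leq K$, so \cref{thm:wavelet_distance_lower} with $L_0=0$ gives distance $\geq 2^{-l}\geq 2^{-K}$ between the series. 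The extra element is at distance $\geq2^{-K}$ from the others: from its own partner via evaluation at $0$ as before, and from the rest via \cref{thm:wavelet_distance_lower}.

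This yields $2^d+1>2^N$ elements of $\set{F}_i$ that are pairwise at distance $\geq 2^L\sqrt3\,2^{-K}$. By pigeonhole, two of them receive the same codeword and hence the same reconstruction. By the triangle inequality one of them has error at least half their distance, namely
\begin{align}
\sqrt3\,2^{L-K-1}=\sqrt3\,2^{L-\lceil\ld(N+2^L)\rceil}.
\end{align}

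\textbf{Step 3: final estimate.} The second inequality follows from $\lceil\ld x\rceil\leq \ld x+1$, which gives $2^{-\lceil\ld x\rceil}\geq 1/(2x)$ with $x=N+2^L$.

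I expect the main care to be needed in Step 1. One must justify that fixing the lower levels leaves a faithful copy of the tail class, and that the null sets where \cref{thm:walsh_modes} fails do not affect the essential suprema. The distance lemma must also be applied with the first differing level $l\geq L$, on all of $[0,1)$, to the full series rather than to the tails.
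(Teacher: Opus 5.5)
Your proposal is correct and follows essentially the same route as the paper. You reduce to the coordinate with $n_i=N$, identify $f_i\circ g$ with $2^L\sqrt3$ times the tail $g-g_{L-1}$ up to a sign pattern, and pack series separated by $2^{-K}$ with $K=\lceil\ld(N+2^L)\rceil-1$; you use $2^d+1$ points where the paper uses $2\cdot 2^d$. Fixing the levels below $L$ and using that multiplication by the fixed sign pattern is an isometry is a slightly leaner substitute for the paper's sign-flip map $c^{(l)}_j=b^{(l)}_j\,\omega_i(g_{L-1}(j2^{-l}))$, and it suffices because only a lower bound is needed.
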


\begin{proof}
In separated approximation, we discretize each element of the $M$-tuple $F\in\set{F}$ individually.
Instead of considering the class $\sepClass[n]$ of encoder/decoder pairs directly for the full tuple, we first consider element-wise discretization maps $\Delta_{\set{C}}$ with $\set{C}\subset L^\infty([0,1])$ and then combine the element-wise distortion, measured in $L^\infty([0,1])$, into the distortion of the full tuple, measured in $L^\infty([0,1];\rr[M])$.
We then have the equivalent formulation
\begin{align}
    &\kern-2em\inf_{(\encoder,\decoder)\in\sepClass[n]}
    \sup_{g\in\set{G}}
    \norm{g-\decoder\circ\encoder(g)}[{L^\infty([0,1];\rr[M])}]
    \\&
    =
    \inf_{\substack{\set{C}_i\subset L^\infty([0,1]),\\\card{\set{C}}=2^{n_i},\\i\in[M]}}
    \sup_{h\in\set{F}}
    \max_{j\in[M]}
    \norm{h_j-\Delta_{\set{C}_j}(h_j)}[{L^\infty([0,1])}]
    \\&
    =
    \inf_{\substack{\set{C}_i\subset L^\infty([0,1]),\\\card{\set{C}}=2^{n_i},\\i\in[M]}}
    \max_{j\in[M]}
    \sup_{g\in\set{G}}
    \norm{f_j\circ g-\Delta_{\set{C}_j}(f_j\circ g)}[{L^\infty([0,1])}]
    \\&
    =
    \max_{j\in[M]}
    \inf_{\substack{\set{C}_j\subset L^\infty([0,1]),\\\card{\set{C}}=2^{n_j}}}
    \sup_{g\in\set{G}}
    \norm{f_j\circ g-\Delta_{\set{C}_j}(f_j\circ g)}[{L^\infty([0,1])}].
\end{align}
Here, we first simply swap the supremum and the maximum (that is, two suprema) and simplify the expression.
Second, we see that the infimum is over a Cartesian product of families of sets and the argument of the maximum only depends on a single element in the Cartesian product.
Therefore, we can also swap the infimum and the maximum.

We now continue with the analysis for a single index $i\in[M]$.
Let $\coeff\in\CoeffSpace$ be fixed, and choose $\coeffC\in\CoeffSpace$ such that
\begin{alignat}{2}
    \coeffC[l][j]&=\coeff[l][j]\cdot\walsh{L}{i}(g_{L-1}(j2^{-l};\coeff))\qquad&&\text{for}\qquad l\geq L,\,j\in[2^l].
\end{alignat}
Then for all $x\in [0,1)$ we get
\begin{align}
    (2^{L}\sqrt{3})^{-1}\cdot f_i\circ g(x;\coeff)&=(g(x;\coeff)-g_{L-1}(x;\coeff))(\omega_i\circ g_{L-1}(x;\coeff))
    \\&
    =
    \sum_{l=L}^\infty\sum_{j=0}^{2^l-1}\coeff[l][j]\wavelet[l][j](x) \walsh{L}{i}(g_{L-1}(x;\coeff))
    \\&
    =
    \sum_{l=L}^\infty\sum_{j=0}^{2^l-1}\left(\coeff[l][j] \walsh{L}{i}(g_{L-1}(j2^{-l};\coeff))\right)\wavelet[l][j](x)
    \\&
    =
    \sum_{l=L}^\infty\sum_{j=0}^{2^l-1}\coeffC[l][j]\wavelet[l][j](x)
    \\&
    =
    g(x;\coeffC)-g_{L-1}(x;\coeffC).
\end{align}
Here we used that the wavelets $\wavelet[l][j]$ are only supported on $\dyadicU{l}{j}$ and $g_{L-1}$ is piecewise constant such that $g_{L-1}(x;\coeff)=g_{L-1}(j2^{-l};\coeff)$ holds for all $x\in\dyadicU{l}{j}$.
As such an equivalent construction exists for every element of $\CoeffSpace$, the discretization with $B\in\nn$ bits reduces to
\begin{align}
    &\kern-2em\inf_{\substack{\set{C}\subset L^\infty(\dyadicU{0}{0})\\\card{\set{C}}=2^B}}\sup_{g\in\RadHaarSpace}\norm{f_i\circ g-\Delta_{\set{C}}(f_i\circ g)}[{L^\infty(\dyadicU{0}{0})}]
    \\&
    =
    2^L\sqrt{3}\inf_{\substack{\set{C}\subset L^\infty(\dyadicU{0}{0})\\\card{\set{C}}=2^B}}\sup_{\coeffC\in\CoeffSpace}\norm{g(\cdot;\coeffC)-g_{L-1}(\cdot;\coeffC)-\Delta_{\set{C}}(g(\cdot;\coeffC)-g_{L-1}(\cdot;\coeffC))}[{L^\infty(\dyadicU{0}{0})}].
\end{align}

We now show for an arbitrary $L_1\geq L$ that if $\set{C}$ forms an $\varepsilon$-cover of $\{g(\cdot;\coeff)-g_{L-1}(\cdot;\coeff)|\coeff\in\CoeffSpace\}$ with $\varepsilon<2^{-L_1-1}$ then $\ld(\card{C})> 2^{L_1+2}-2^{L+1}$.

Let
\begin{align}
    \CoeffSpace[L:L_1]
    &\defeq
    \{\coeff\in\CoeffSpace[L_1]|\forall l<L:\coeff[l]=1\},
    \\
    N
    &\defeq
    \card{\CoeffSpace[L:L_1]}=2^{\sum_{l=L}^{L_1}2^l}=2^{2^{L_1+1}-2^L},
\end{align}
and consider an enumeration
\begin{align}
    \CoeffSpace[L:L_1]=\{\coeff*_k|k\in[N]\}.
\end{align}
We then construct $\coeff_k,\coeffC_k\in\CoeffSpace$ for $k\in[N]$ such that
\begin{alignat}{2}
    \coeff[l][k,j]&=\coeffC[l][k,j]=\coeff*[l][k,j]&\qquad&\text{for}\qquad l\leq L_1,\, j\in[2^l],\\
    \coeff[l][k,j]&=-\coeffC[l][k,j]=1&&\text{for}\qquad l>L_1,\,j\in[2^l].
\end{alignat}
We see for $k_1,k_2\in[N]$ with $k_1\neq k_2$ that there exist $x\in\dyadicU{0}{0}$ and $L_2\in[L_1+1]$ with $g_{L_2}(x;\coeff_{k_1})=g_{L_2}(x;\coeffC_{k_1})\neq g_{L_2}(x;\coeff_{k_2})=g_{L_2}(x;\coeffC_{k_2})$.
From \cref{thm:wavelet_distance_lower} we then get
\begin{align}
    \norm{g(\cdot;\coeff_{k_1})-g(\cdot;\coeff_{k_2})}[L^\infty(\dyadicU{0}{0})]
    &\geq 2^{-L_2}\geq 2^{-L_1}
    \\
    \norm{g(\cdot;\coeff_{k_1})-g(\cdot;\coeffC_{k_2})}[L^\infty(\dyadicU{0}{0})]
    &\geq 2^{-L_2}\geq 2^{-L_1}
    \\
    \norm{g(\cdot;\coeffC_{k_1})-g(\cdot;\coeff_{k_2})}[L^\infty(\dyadicU{0}{0})]
    &\geq 2^{-L_2}\geq 2^{-L_1}
    \\
    \norm{g(\cdot;\coeffC_{k_1})-g(\cdot;\coeffC_{k_2})}[L^\infty(\dyadicU{0}{0})]
    &\geq 2^{-L_2}\geq 2^{-L_1}.
\end{align}
Furthermore, for any $k\in[N]$ we get
\begin{align}
    g(0;\coeff_k)-g(0;\coeffC_k)
    =
    \sum_{l=L_1+1}^\infty 2^{-l-1}(1-(-1))
    =
    \sum_{l=L_1+1}^\infty 2^{-l}
    =2^{-L_1}.
\end{align}
For an $\varepsilon$-cover with $\varepsilon<2^{-L_1-1}$ and center points $\set{C}$ we get that no pair of elements in
\begin{align}
    \{\coeff_k,\coeffC_k|k\in[N]\}
\end{align}
can be in the same ball of radius $\varepsilon$ and therefore $\card{\set{C}}\geq 2N>N=2^{2^{L_1+1}-2^{L}}$.
From the reverse implication we see for an arbitrary $B\in\nn$ with $B\leq 2^{L_1+1}-2^{L}$ for some $L_1\geq L$ that 
\begin{align}
    \inf_{\substack{\set{C}\subset L^\infty(\dyadicU{0}{0})\\\card{\set{C}}=2^B}}\sup_{g\in\RadHaarSpace}\norm{f_i\circ g-\Delta_{\set{C}}(f_i\circ g)}[{L^\infty(\dyadicU{0}{0})}]\geq \sqrt{3}2^{L-L_1-1}.
\end{align}
Specifically, for any $B\in\nn$ we can choose $\overline{L}_1\defeq\lceil\ld(B+2^{L})\rceil-1\leq \ld(B+2^{L})$ and get $B\leq 2^{\overline{L}_1+1}-2^{L}$, thus,
\begin{align}
    &\kern-2em\inf_{\substack{\set{C}\subset L^\infty(\dyadicU{0}{0})\\\card{\set{C}}=2^B}}\sup_{g\in\RadHaarSpace}\norm{f_i\circ g-\Delta_{\set{C}}(f_i\circ g)}[{L^\infty(\dyadicU{0}{0})}]
    \\&
    \geq
    \sqrt{3}
    2^{L-\overline{L}_1-1}
    \geq
    \sqrt{3}2^{L-\ld(B+2^{L})-1}
    \geq
    \sqrt{3}\frac{2^{L-1}}{B+2^{L}}
\end{align}

Overall, with $N=\min_{i\in[M]}n_i$, 
we get
\begin{align}
    &\kern-2em\inf_{(\encoder,\decoder)\in\sepClass[n]}
    \sup_{h\in\set{F}}
    \norm{h-\decoder\circ\encoder(h)}[{L^\infty([0,1];\rr[M])}]
    \\&
    =
    \max_{j\in[M]}\inf_{\substack{\set{C}\subset L^\infty([0,1])\\\card{\set{C}}=2^{n_j}}}\sup_{g\in\RadHaarSpace}\norm{f_j\circ g-\Delta_{\set{C}}(f_j\circ g)}[{L^\infty([0,1])}]
    \\&
    \geq
    \sqrt{3}2^{L-\lceil\ld(N+2^{L})\rceil}
    \geq 
    \frac{2^{L-1}\sqrt{3}}{N+2^{L}}.
\end{align}
\end{proof}

\begin{theorem}[Complexity of Joint Approximation]
    \label{thm:wavelet_joint}
    Let $\set{G}$ be the class of Rademacher-Haar wavelets, $M\in\nn$, $L=\lceil\ld(M)\rceil$, $N\in\nn$ with $N\geq 2^L$, $(f_0,\ldots,f_{M-1})$ be the first $M$ Walsh-Sawtooth functions of order $L$, and $\set{F}\defeq \{(f_0\circ g,\ldots,f_{M-1}\circ g)|g\in\set{G}\}$.
    Then,
    \begin{align}
        \inf_{(\encoder,\decoder)\in\jointClass[N]}
        \sup_{h\in\set{F}}
        \norm{h-\decoder\circ\encoder(h)}[{L^\infty([0,1];\rr[M])}]
        \leq\frac{2^{L+1}\sqrt{3}}{N+1}.
    \end{align}
\end{theorem}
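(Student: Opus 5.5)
We will construct an explicit joint codec that encodes the shared feature $g$ once and decodes all $M$ coordinates from it. For $h=(f_0\circ g,\ldots,f_{M-1}\circ g)\in\set{F}$, the encoder stores a $B$-bit optimal discretization of $g$ on $[0,1)$. We take $B=N$, which is admissible since $N\geq 2^L\geq 1$. By the General Rate Bounds theorem with $L_0=0$, $j=0$, we may pick truncation $g_{\underline{L}}$ with
\begin{align}
    \underline{L}=\lfloor\ld(N+1)-1\rfloor,\qquad \norm{g-g_{\underline{L}}}[L^\infty]=2^{-\underline{L}-1}\leq \frac{2}{N+1}.
\end{align}
Truncation at level $\underline{L}$ uses $2^{\underline{L}+1}-1\leq N$ bits, and we pad the rest. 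Since $N\geq 2^L$ we have $\underline{L}\geq L-1$, so the first $L$ levels of coefficients (those determining $g_{L-1}$) are among the encoded bits. The decoder outputs
\begin{align}
    \hat h_i\defeq 2^L\sqrt{3}\,(g_{\underline{L}}-g_{L-1})\cdot(\omega_i\circ g_{L-1}),\qquad i\in[M].
\end{align}

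For the error we use the multiplicative representation of \cref{thm:walsh_modes}: almost everywhere, $f_i\circ g=2^L\sqrt{3}(g-g_{L-1})\cdot(\omega_i\circ g_{L-1})$. Because the decoder knows $g_{L-1}$ exactly, the Walsh factor is reproduced without error. Hence, almost everywhere,
\begin{align}
    \abs{f_i\circ g-\hat h_i}=2^L\sqrt{3}\,\abs{g-g_{\underline{L}}}\,\abs{\omega_i\circ g_{L-1}}=2^L\sqrt{3}\,\abs{g-g_{\underline{L}}}.
\end{align}
Taking the essential supremum and the maximum over $i$ then gives $2^L\sqrt{3}\cdot 2/(N+1)=2^{L+1}\sqrt{3}/(N+1)$, which is the claimed bound.

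The main obstacle is making sure the decoder genuinely has access to $g_{L-1}$, so that no error leaks through the discontinuous Walsh and sawtooth readouts. This is why we decode through the multiplicative form rather than composing $f_i$ with $g_{\underline{L}}$: the latter could place $g_{\underline{L}}(x)$ in the wrong dyadic interval $\dyadicV{L}{j}$ at endpoints and incur an $O(1)$ error. The key point to check is the inequality $\underline{L}\geq L-1$ whenever $N\geq 2^L$, together with the fact that the truncation bits at levels $0,\ldots,L-1$ determine $g_{L-1}$ exactly. The remaining care is the almost-everywhere identification in \cref{thm:walsh_modes}, which is harmless for $L^\infty$ norms.
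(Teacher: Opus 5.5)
Your proposal is correct and is essentially the paper's own argument: the paper likewise spends $2^L-1$ bits on the coefficients of $g_{L-1}$, which fix all Walsh factors $\omega_i\circ g_{L-1}$, and the remaining $N-2^L+1$ bits on the tail truncated at level $\lfloor\ld(N+1)-1\rfloor$ (together exactly your truncation $g_{\underline{L}}$), then decodes through the multiplicative form of \cref{thm:walsh_modes} to get error $2^L\sqrt{3}\cdot 2^{-\underline{L}-1}\leq 2^{L+1}\sqrt{3}/(N+1)$. As a minor aside, your stated reason for avoiding the decoder $f_i\circ g_{\underline{L}}$ does not hold: $\abs{g_{\underline{L}}-g_{L-1}}\leq 2^{-L}-2^{-\underline{L}-1}<2^{-L}$ keeps $g_{\underline{L}}$ strictly inside $\dyadicV{L}{\sigma_i}$ on $\dyadicU{L}{i}$ (with $\sigma_i$ as in the proof of \cref{thm:walsh_modes}), so the two decoders coincide and there is no endpoint error, but this does not affect your proof.
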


\begin{proof}
For the joint approach, we observe from the decomposition in \cref{thm:walsh_modes} that for an arbitrary realization $g\in\RadHaarSpace$ it is sufficient to separately approximate $g-g_{L-1}$ using $B_g\in\nn{}$ bits and all compositions $\walsh{L}{i}(g_{L-1})$ using a total of $B_f\in\nn{}$ bits.

In \cref{thm:walsh_modes} we observed that $\walsh{L}{i}(g)\equiv \walsh{L}{i}(g_{L-1})$.
As the functions $\walsh{L}{i}$, $i\in[M]$ are fixed and uniquely identified by the index $i$, it is therefore sufficient, to discretize $g_{L-1}$ in order to get all $\walsh{L}{i}(g_{L-1})$ for all $i\in[M]$.
To do so, a total number of
\begin{align}
    B_f=\sum_{l=0}^{L-1}2^l=2^L-1
\end{align}
bits is sufficient.

For the remainder we again approximate $g-g_{L-1}$ by $g_{L_1}-g_{L-1}$ and take an upper bound by rounding each $B_g\in\nn$ down to the next largest full level $\underline{L}_1\defeq \lfloor \ld(B_g+2^L)-1\rfloor$.
This leads to the error bound
\begin{align}
    &\kern-2em\inf_{\substack{\set{C}\subset L^\infty(\dyadicU{0}{0})\\\card{\set{C}}=2^{B_g+B_f}}}\sup_{g\in\RadHaarSpace}\norm{f_i\circ g-\Delta_{\set{C}}(f_i\circ g)}[{L^\infty(\dyadicU{0}{0})}]
    \\&
    \leq
    2^L\sqrt{3}\inf_{\substack{\set{C}\subset L^\infty(\dyadicU{0}{0})\\\card{\set{C}}=2^{B_g}}}\sup_{g\in\RadHaarSpace}\norm{(g-g_{L-1}-\Delta_{\set{C}}(g-g_{L-1}))\cdot\omega_i(g)}[{L^\infty([0,1])}]
    \\&
    \leq
    2^L\sqrt{3}\sup_{g\in\RadHaarSpace}\norm{g-g_{L-1}-g_{\underline{L}_1}-g_{L-1}}[{L^\infty(\dyadicU{0}{0})}]
    \\&
    =
    2^L\sqrt{3}\sup_{g\in\RadHaarSpace}\norm{g-g_{\underline{L}_1}}[{L^\infty(\dyadicU{0}{0})}]
    \\&
    =
    2^L\sqrt{3}2^{-\underline{L}_1-1}
    \leq
    2^L\sqrt{3}2^{-\ld(B_g+2^L)+1}
    =
    \frac{2^{L+1}\sqrt{3}}{B_g+2^L}.
\end{align}
For $N\geq 2^L$ we can allocate $B_f=2^L-1$ and $B_g=N-2^L+1$.
We then get perfect discretization of $\walsh{L}{i}(g_{L-1})$ for all $i\in[M]$ and
\begin{align}
    \inf_{(\encoder,\decoder)\in\jointClass[N]}
    \sup_{h\in\set{F}}
    \norm{h-\decoder\circ\encoder(h)}[{L^\infty([0,1];\rr[M])}]
    \leq
    \frac{2^{L+1}\sqrt{3}}{B_g+2^L}
    \leq
    \frac{2^{L+1}\sqrt{3}}{N+1}.
\end{align}
\end{proof}

\subsection{Proof of \texorpdfstring{\cref{thm:approximation_gap}}{Main Theorem}}
\label{sec:proof_main_theorem}

We can now finally combine the results from above in order to prove \cref{thm:approximation_gap}.

\begin{proof}
    Let $n\in\nn[M]$ with $n_i\geq 2$ and $N=M\min_{i\in[M]}n_i$. With $n_i\geq 2$, we get $N\geq 2M=2^{\ld(M)+1}\geq2^{\lceil \ld(M)\rceil}=2^L$. Therefore, $N$ is a valid number of bits for \cref{thm:wavelet_joint} and $N/M=\min_{i\in[M]}n_i$ is valid for \cref{thm:wavelet_separated}.
    
   Due to the optimality of truncation codes for the Rademacher-Haar wavelet series of \cref{thm:optimal_codec}, we know that the infima in \cref{thm:wavelet_separated} and \cref{thm:wavelet_joint} are indeed attained.
    We denote $\set{C}_J^N$ as the optimal joint discretization set and $\set{C}_S^n$ as the optimal separate discretization set.

    The upper bound on the supremum for joint discretization in \cref{thm:wavelet_joint} then immediately translates to a for-all statement over $\set{F}$.
    For separate discretization we first observe in case $N/M=2^K-2^L$ for some $K\in\nn$ that the left side in the lower bound
    \begin{align}
        \sqrt{3}2^{L-\lceil\ld(N/M+2^L)\rceil}
        =
        \sqrt{3}2^{L-K}
        \geq
        \frac{2^{L-1}\sqrt{3}}{N/M+2^L}
    \end{align}
    is indeed attained by a truncation (compare \cref{thm:optimal_codec}) and that each realization has the same distance to the truncation (see \cref{thm:truncation_error}).
    Therefore, the supremum in \cref{thm:wavelet_separated} is indeed attained for all elements in $\set{F}$ and the lower bound holds for all $h\in\set{F}$.
    Second we observe that if $N/M\neq 2^K-2^L$ for all $K\in\nn$ the inequality
    \begin{align}
        \sqrt{3}
        2^{L-\lceil\ld(N/M+2^{L})\rceil}
        >
        \frac{2^{L-1}\sqrt{3}}{N/M+2^{L}}
    \end{align}
    is strict.
    Therefore, there exists an $h\in\set{F}$ for which the right side is still a valid lower bound.

    Overall, we can combine the upper bound for joint discretization
    \begin{align}
        \norm{h-\Delta_{\set{C}_J^N}(h)}[{L^\infty([0,1];\rr[M])}]
        &\leq
        \frac{2^{L+1}\sqrt{3}}{N+1}
        \intertext{and the lower bound for separate discretization}
        \norm{h-\Delta_{\set{C}_S^{n\vphantom{N}}}(h)}[{L^\infty([0,1];\rr[M])}]
        &\geq
        \frac{2^{L-1}\sqrt{3}}{N/M+2^{L}},
    \end{align}
    which results in
    \begin{align}
        \frac{\norm{h-\Delta_{\set{C}_S^{n\vphantom{N}}}(h)}[{L^\infty([0,1];\rr[M])}]}{\norm{h-\Delta_{\set{C}_J^N}(h)}[{L^\infty([0,1];\rr[M])}]}
        \geq
        \frac{M}{4}\frac{1+\frac{1}{N}}{1+\frac{2^{L}M}{N}}
        \geq
        \frac{M}{4}\frac{1+\frac{1}{N}}{1+\frac{2M^2}{N}}.
    \end{align}
\end{proof}

\section{Approximation by Heaviside Neural Networks}
\label{sec:heaviside_approx}
We now realize the codec separation in a neural network model. The choice of activation is dictated by the function class itself. Generic members of $\mathcal{G}$ have jump discontinuities of constant height at dyadic points, and the uniform distance from a function with jump height $2h$ to any continuous function is at least $h$. Consequently no network with a continuous activation, in particular no ReLU network, can uniformly approximate $\mathcal{F}$ below a constant error, independently of width
and depth.

For this we construct a neural network with Heaviside activation function that indeed approximates $f_i\circ g$ efficiently.
From the equivalent representation in \cref{thm:walsh_modes}, we 
see that $f_i\circ g$ decomposes into a product of two functions that can be discretized to piecewise constant functions, which then can trivially be represented by Heaviside neural networks.
On the one hand, we have the function $\walsh{L}{i}\circ g_{L-1}$ which has $2^L\leq 2M$ constant pieces and can therefore be represented by a Heaviside neural network with a single hidden layer of width $2^L$.
On the other hand, we have the tail of the wavelet series $g-g_{L-1}$.
For this function, we can look at the truncation $g-g_{L-1}$ to a level $L_1\geq L$ as in \cref{thm:wavelet_joint} which results in the given bound.
We see that these truncations are again piecewise constant with $2^{L_1+1}$ pieces.
In order to realize a joint approximation, we now require a neural network that can represent or approximate a pointwise multiplication.

A well-known approximation of pointwise multiplication by neural networks is given by \cite{Yarotsky17ErrorBoundsApproximations}, where a composition of triangle functions is used in order to approximate a square function.
Based on the square function, it is then possible to approximate multiplication via $2a\cdot b=(a+b)^2-a^2-b^2$.
With the construction of \cite{Yarotsky17ErrorBoundsApproximations}, this approximation can be carried out in logarithmic complexity in terms of the error $\varepsilon$.
That is, if we can approximate a triangle function or equivalently the absolute value on the domain $[-1,1]$ in logarithmic complexity, which implies exponential convergence, then the overall approximation of multiplication is sufficiently efficient so that the dominating term in approximating $(g-g_{L-1})\cdot (\walsh{L}{i}\circ g_{L-1})$ is the approximation of $g-g_{L-1}$.

In our approach, we restrict ourselves to combinations of affine transformations, a single type of nonlinear function, namely the Heaviside function, and skip connections.
Note that under these restrictions we cannot simply consider $x\cdot\chi_{[0,\infty)}(x)-x\cdot\chi_{(-\infty,0)}(x)$.
In order to achieve exponential convergence towards a piecewise linear function, we require a construction where the number of constant pieces increases exponentially with the number of parameters.
For neural networks, this can be achieved by keeping a constant width throughout the network and just letting the depth grow.
As the absolute value is an even function, we aim to build an ever finer and symmetric partition of the domain and the corresponding characteristic functions.
As a first step, we consider the function
\begin{align}
    \sigma_1\deffun[-1,1]\to\rr,\qquad\sigma_1(x)\defeq-\chi_{(-\infty,-0.5)}(x)+\chi_{[0.5,\infty)}(x).
\end{align}
We then recursively build odd sawtooth functions for which the number of segments increases exponentially in the number of levels $K\in\nn$ via the recursion
\begin{alignat}{3}
    \varphi_0&\deffun[-1,1]\to\rr,\qquad&\varphi_0(x)&\defeq x,&&\\
    \varphi_i&\deffun[-1,1]\to\rr,&\varphi_i(x)&\defeq 2\varphi_{i-1}(x)-\sigma_1(\varphi_{i-1}(x)),&\quad&\text{for}\,1\leq i\leq L.
\end{alignat}
The recursive construction of sawtooth functions with exponentially many segments follows the mechanism underlying the depth-separation results of \cite{Telgarsky16BenefitsDepthNeural}.
The resulting functions are shown in \cref{fig:abs_approx_functions} (left) for $i\in\{0,1,2\}$.
From this we can then build characteristic functions of symmetric dyadic sets via
\begin{alignat}{3}
    \sigma_2&\deffun[-1,1]\to\rr,\qquad&\sigma_2(x)&\defeq\chi_{(-\infty,-0.5)}(x)+\chi_{[0.5,\infty)}(x)
    \intertext{and}
    \psi_i&\deffun[-1,1]\to\rr,&\psi_i(x)&\defeq\sigma_2(\varphi_i(x)),
\end{alignat}
which results in the functions shown in \cref{fig:abs_approx_functions} (middle).
Multiplication with appropriate powers of two and summation then finally leads to the approximation
\begin{alignat}{3}
    \Psi_{K}&\deffun[-1,1]\to\rr,\qquad&\Psi_{K}(x)&=\sum_{l=0}^{K-1}2^{-l-1}\psi_l(x)%\sigma_2(\varphi_l(x))
\end{alignat}
of the absolute value by a step function, which is shown in \cref{fig:abs_approx_functions} (right).
\begin{figure}
    \centering
    \includegraphics[page=1]{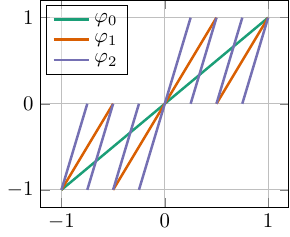}
    \hfill
    \includegraphics[page=2]{figures/heaviside_network/functions.pdf}
    \hfill
    \includegraphics[page=3]{figures/heaviside_network/functions.pdf}
    \caption{Intermediate steps of approximating the absolute value by neural networks with Heaviside activation function.
    The input is recursively turned into an odd sawtooth function (left), which is then transformed into characteristic functions of certain dyadic sets (middle), and finally summed up to an approximation of the absolute value (right).}
    \label{fig:abs_approx_functions}
\end{figure}
The corresponding feedforward network structure with skip connections is depicted in \cref{fig:abs_approx}.
\begin{figure}
    \centering
    \includegraphics{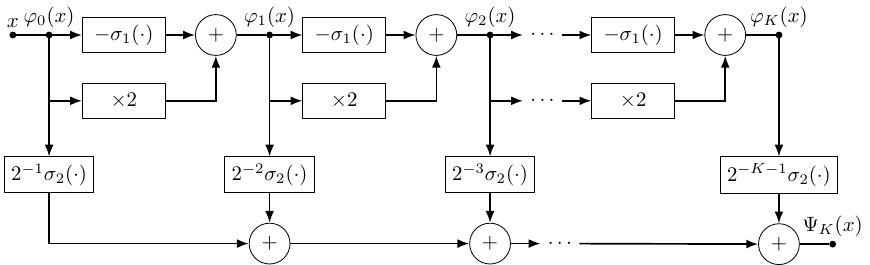}
    \caption{Recursive network structure for approximating the absolute value by neural networks with Heaviside activation function.
    Each layer consists of four neurons with Heaviside activation function (two for $\sigma_1$ and two for $\sigma_2$) and two skip connections (one to the next layer and one to the last layer).}
    \label{fig:abs_approx}
\end{figure}
Overall the resulting function has $2^{K+1}$ constant pieces with equal length, which results in the function
\begin{align}
    \Psi_K(x)=
    \begin{cases}
        2^{-K}\lfloor2^{K}\abs{x}\rfloor,&\abs{x}<1,\\
        1-2^{-K},&\abs{x}=1,        
    \end{cases}
\end{align}
and reaches a uniform error of $\varepsilon\in \mathcal{O}(2^{-K})$.

\begin{lemma}[Approximation of Pointwise Product]
\label[lemma]{thm:product}
    Let $S\in\nn$. There exists a Heaviside neural network $\hat{\times}_S:[0,1]^2\to[0,1]$ of width $12$ and depth $O(S^2)$, that achieves for any $a,b\in[0,1]$
    \begin{align}
        \abs{a\cdot b-\hat{\times}_S(a,b)}\lesssim S2^{-S}.
    \end{align}
\end{lemma}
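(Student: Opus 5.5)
We follow Yarotsky's scheme and build the product from a squaring network via the polarization identity
\begin{align}
    a\cdot b=2\left(\left(\tfrac{a+b}{2}\right)^2-\left(\tfrac{a}{2}\right)^2-\left(\tfrac{b}{2}\right)^2\right),
\end{align}
with the arguments rescaled so that every squared quantity lies in $[0,1]$. The width $12$ then comes from running three copies of a width-$4$ squaring block in parallel: one copy for each of $(a+b)/2$, $a/2$ and $b/2$. Each block uses the four Heaviside neurons per layer of \cref{fig:abs_approx}, and skip connections carry the affine pass-through and the accumulated output. The outer map $(a,b)\mapsto$ three inputs, and the final signed sum, are affine and absorbed into adjacent layers.

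\textbf{Squaring block.} We use Yarotsky's representation of $x^2$ on $[0,1]$:
\begin{align}
    x^2=x-\sum_{s\geq 1}4^{-s}T_s(x),
\end{align}
where $T_s$ is the $s$-fold composition of the tent map $T(x)=1-\abs{2x-1}$. Truncating at $s=S$ gives a piecewise-linear interpolant $Q_S$ with $\abs{x^2-Q_S(x)}\leq 2^{-2S-2}$. A Heaviside network cannot produce $T$ exactly, so we replace the absolute value by the step approximation $\Psi_K$ from above. This gives $\hat T(x)=1-\Psi_K(2x-1)$, which satisfies
\begin{align}
    \sup_{x}\abs{T(x)-\hat T(x)}\leq 2^{-K}.
\end{align}
We realize $\hat T$ with $K$ of the recursive layers of \cref{fig:abs_approx}, so depth $O(K)$ and width $4$. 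Composing $S$ such blocks gives $\hat T_s$ for $s\leq S$ at depth $O(SK)$. A skip connection routes each weighted $-4^{-s}\hat T_s$ into an accumulator.

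\textbf{Error propagation (main obstacle).} Composition amplifies errors because $T$ is $2$-Lipschitz. If we set $e_s=\sup\abs{T_s-\hat T_s}$, then
\begin{align}
    e_s\leq 2e_{s-1}+2^{-K}, \qquad\text{so}\qquad e_s\leq 2^{s-K}.
\end{align}
A subtlety is that $\hat T$ is a step function evaluated at a perturbed argument. The Lipschitz bound must therefore be applied to the exact $T$ at the perturbed point, and the step error added separately; this is exactly the recursion above. We also need the intermediate outputs to stay in $[0,1]$. This holds since $\Psi_K(y)\in[0,1]$ for $y\in[-1,1]$, so $\hat T$ maps $[0,1]$ into $[0,1]$. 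The weighted total error is then
\begin{align}
    \sum_{s\leq S}4^{-s}e_s\leq \sum_s 2^{-s-K}\leq 2^{-K}.
\end{align}
Choosing $K=S$ gives a squaring error of order $2^{-2S}+2^{-S}$. This is at depth $O(S^2)$ if each of the $S$ tent blocks is built from scratch with $K=S$ layers.

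Through the polarization identity the three squaring errors combine with constant factors. This yields $\abs{ab-\hat\times_S(a,b)}\lesssim 2^{-S}\leq S2^{-S}$; the factor $S$ gives slack, for instance if the per-block step count is taken as $S-\ld S$ or if rounding of $\Psi_K$ at the endpoint $\abs{x}=1$ adds $O(2^{-K})$ per level. Finally, we clip the output into $[0,1]$, which does not increase the error since $ab\in[0,1]$. Clipping with Heaviside units is not exact, so instead we argue that the output lies in $[-C2^{-S},1+C2^{-S}]$ and adjust constants, or we compose with a step approximation of $\min\{\max\{\cdot,0\},1\}$ of accuracy $2^{-S}$.

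The step I expect to be delicate is the error recursion for composed step-function approximations of tents. Keeping the constant width of $4$ per tent block while preserving the range invariant is what the whole bound rests on.
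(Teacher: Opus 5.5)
Your proposal is correct and follows essentially the same route as the paper: Yarotsky's tent-map expansion of the square with $\abs{\cdot}$ replaced by the step function $\Psi_K$, the error recursion $e_s\leq 2e_{s-1}+2^{-K}$ from the $2$-Lipschitz property of the tent map, and three width-$4$ squaring blocks in parallel via polarization, giving width $12$ and depth $O(S^2)$. Two things you add go slightly beyond the paper. First, using the weights $4^{-s}$ to absorb the $2^{s}$ error amplification sharpens its accounting: the paper sums $S$ per-level errors of size $2^{-S}$ and so gets $S2^{-S}$, whereas you get $2^{-S}$. Second, your explicit checks of the range invariant $\hat T([0,1])\subseteq[0,1]$ and of the codomain $[0,1]$ fill in points the paper leaves implicit.
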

\begin{proof}
The proof of our statement is based on the composition of triangle functions of \cite{Yarotsky17ErrorBoundsApproximations}.
In order to get a final error estimate for the square function, we first have to track, how the error of our approximation of the absolute value propagates through the composition.

Let $\delta:[0,1]\to[0,1]$ with $\delta(x)\defeq 1-2\abs{x-1/2}$, $\hat{\delta}:[0,1]\to[0,1]$ with $\hat{\delta}(x)\defeq 1-2\Psi_K(x-1/2)$, and for $T \geq 2$ define
\begin{alignat}{2}
    \delta_1&\defeq \delta,&\hat{\delta}_1&\defeq \hat{\delta}\\
    \delta_T&\defeq \delta\circ\delta_{T-1},&\hat{\delta}_T&\defeq \hat{\delta}\circ\hat{\delta}_{T-1}\\
    \varepsilon_T&\defeq \norm{\delta_T-\hat{\delta}_T}[L^\infty([0,1])].
\end{alignat}
We show that $\varepsilon_T\leq (2^T-1)\varepsilon_1$, for which we use three major steps.
First, an error decomposition with triangle inequality,
second, the Lipschitz-continuity of $\delta$ with Lipschitz constant $2$,
and third, dropping the first layer in a compositional structure bounds the $L^\infty$-norm if the image of the first layer is a subset of the domain of the norm.
That is, for an arbitrary $T$ we get
\begin{align}
    \norm{\delta_T-\hat{\delta}_T}[L^\infty([0,1])]
    &=
    \norm{\delta_{T}-\delta\circ\hat{\delta}_{T-1}+\delta\circ\hat{\delta}_{T-1}-\hat{\delta}_{T}}[L^\infty([0,1])]
    \\&
    \leq
    \norm{\delta_{T}-\delta\circ\hat{\delta}_{T-1}}[L^\infty([0,1])]
    +
    \norm{\delta\circ\hat{\delta}_{T-1}-\hat{\delta}_{T}}[L^\infty([0,1])]
    \\&
    \leq
    2\norm{\delta_{T-1}-\hat{\delta}_{T-1}}[L^\infty([0,1])]
    +
    \norm{\delta-\hat{\delta}}[L^\infty([0,1])]
    \\&
    =
    2\varepsilon_{T-1}
    +
    \varepsilon_1
    \\&
    \leq
    2(2^{T-1}-1)\varepsilon_1+\varepsilon_1
    =
    (2^T-1)\varepsilon_1.
\end{align}
Furthermore, $\abs{\cdot}-\Psi_K(\cdot)$ attains the maximal error for $x\in\{\pm 1\}$ which leads to
\begin{align}
    \varepsilon_1
    &=
    \norm{\delta-\hat{\delta}}[L^\infty([0,1])]
    \\&
    =
    2\norm{\abs{\cdot}-\Psi_K(\cdot)}[L^\infty([-1,1])]
    \\&
    =
    2(\abs{1}-\Psi_K(1))
    =
    2(1-1+2^{-K-1})
    =
    2^{-K}
\end{align}
and consequently,
\begin{align}
    \varepsilon_T\leq(2^T-1)2^{-K}.
\end{align}
For a fixed $T\in\nn$ we now choose $K=2T$ and can approximate $\delta_T$ with a Heaviside neural network of depth $2T^2$ and width 4 at an error of $\mathcal{O}(2^{-T})$.
The construction for the product structure of \cite{Yarotsky17ErrorBoundsApproximations} is based on linear combinations of $\delta_1,\ldots,\delta_S$ for $S\in\nn$ and results in an error of $\mathcal{O}(2^{-2S-2})$.
For $T=S$, the final error bound is then 
\begin{align}
    \abs{a\cdot b-\hat{\times}_S(a,b)}
    \lesssim 2^{-2S-2}+S2^{-S}
    \lesssim S2^{-S},
\end{align}
where we get the additional $S2^{-S}$ from the error in each $\delta_i,\,i\in[M]$.

Due to the formulation of the product via the formula $2ab=(a+b)^2-a^2-b^2$, we need to repeat the same network three times in width, while the depth stays at $\mathcal{O}(2T^2)=\mathcal{O}(2S^2)=\mathcal{O}(S^2)$.
\end{proof}

\begin{theorem}[Joint Approximation by Heaviside Networks]
\label{thm:heaviside_joint}
    Let $M \in \nn$, $L = \lceil \ld(M) \rceil$, and let $\mathcal{F}$ be the class of orthogonal target functions of \cref{thm:approximation_gap}.
    For $N,S\in\nn$ with $N \geq 2^{L+1}-1$, there exists a composition of a shallow neural network $\Phi^{(I)}$ of width $N+2M^2$ and $M+1$ outputs and a deep neural network $\Phi^{(O)}$ of width $12M$ 
    and depth $\mathcal{O}(S^2)$ such that for each $h\in\set{F}$ only the parameters of $\Phi^{(I)}$ need to be adapted to get
    \begin{align}
        \norm{\Phi^{(O)}\circ \Phi^{(I)} - h}[L^\infty([0,1];\rr[M])]\lesssim M(N^{-1}+S2^{-S}),
    \end{align}
    where the hidden constant is independent of $M$.
\end{theorem}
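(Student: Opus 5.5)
The plan is to realize the multiplicative representation of \cref{thm:walsh_modes},
\begin{align}
    f_i\circ g = 2^L\sqrt{3}\,(g-g_{L-1})\cdot(\walsh{L}{i}\circ g_{L-1})\quad\text{a.e.},
\end{align}
by splitting the network into a shallow adaptive part and a fixed deep readout. The inner network $\Phi^{(I)}$ computes two kinds of outputs from the input $x\in[0,1)$. The first is a step-function approximation $T(x)$ of the tail $g-g_{L-1}$. The second consists of the $M$ sign patterns $s_i(x)=\walsh{L}{i}(g_{L-1}(x))$. Both are piecewise constant on dyadic intervals, so a single hidden layer of Heaviside neurons $\chi_{[t,\infty)}(x)$ with thresholds at dyadic points realizes them exactly, with the output weights equal to the successive jumps.

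For the tail, I take the truncation $g_{L_1}-g_{L-1}$ with $L_1=\lfloor \ld(N+2^L)\rfloor-1\geq L$. This is admissible because $N\geq 2^{L+1}-1$. The truncation has at most $2^{L_1+1}\leq N+2^L$ constant pieces, hence at most that many jumps and Heaviside neurons. By \cref{thm:truncation_error} its uniform error is $2^{-L_1-1}\leq 2/(N+2^L)\lesssim N^{-1}$. Each $s_i$ is constant on the $2^L\leq 2M$ intervals $\dyadicU{L}{m}$, so it needs at most $2^L$ neurons. Without sharing, the $M$ sign heads then cost about $2M^2$ neurons, and the total width is $N+2M^2$ with $M+1$ outputs. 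Only the output weights and biases of $\Phi^{(I)}$ depend on $\coeff$, since the thresholds are fixed dyadic points. This gives the claim that only $\Phi^{(I)}$ is adapted.

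The outer network $\Phi^{(O)}$ applies the product network of \cref{thm:product} $M$ times in parallel, giving width $12M$ and depth $\mathcal{O}(S^2)$. It is fixed, which ensures the heads do not depend on $h$. The obstacle is that \cref{thm:product} is stated only for inputs in $[0,1]^2$, while $T\in[-2^{-L},2^{-L}]$ and $s_i\in\{\pm1\}$. I will handle this with affine rescalings, which are free in the final linear layer of $\Phi^{(I)}$. Setting $a=2^{L-1}T+1/2\in[0,1]$ and $b=(s_i+1)/2\in\{0,1\}$ gives
\begin{align}
    T s_i = 2^{-L+1}\bigl(2ab-a-b+\tfrac12\bigr)\cdot\tfrac{1}{1}\,,
\end{align}
up to bookkeeping of constants; the exact identity is $(2a-1)(2b-1)=4ab-2a-2b+1$. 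The product network then approximates $ab$ with error $\lesssim S2^{-S}$, and the remaining terms are affine and can be carried by skip connections or by an output affine layer.

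Multiplying by $2^L\sqrt3$ cancels the $2^{-L}$ scaling, so the per-coordinate error is $\lesssim 2^L\sqrt3\,2^{-L}S2^{-S}+2^L\sqrt3\cdot N^{-1}$. Since $2^L\leq 2M$, this gives a bound $\lesssim M(N^{-1}+S2^{-S})$ with a constant independent of $M$. Taking the maximum over the $M$ coordinates does not increase this bound.

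The remaining care points are two. First, the measure-zero discrepancies in \cref{thm:walsh_modes} do not affect the $L^\infty$ norm. Second, the term $2^L\cdot\mathcal{O}(S2^{-S})$ is where the factor $M$ in front of $S2^{-S}$ originates, so the bound is sharp enough and no finer rescaling is needed.
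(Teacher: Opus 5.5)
Your proposal is correct and follows essentially the same route as the paper: a shallow Heaviside layer that exactly realizes a truncated tail and the $M$ Walsh sign patterns $\walsh{L}{i}\circ g_{L-1}$, followed by $M$ parallel copies of the fixed product network of \cref{thm:product}, with the prefactor $2^L\sqrt{3}\leq 2\sqrt{3}M$ producing the factor $M$. Your explicit rescaling $a=2^{L-1}T+\tfrac12$, $b=(s_i+1)/2$ is more careful than the paper's footnote and in fact shows that the product error contributes only $\mathcal{O}(S2^{-S})$ after multiplying by $2^L\sqrt{3}$. Your closing remark that the factor $M$ in front of $S2^{-S}$ comes from $2^L\cdot\mathcal{O}(S2^{-S})$ therefore contradicts your own cancellation, but this is harmless because the stated bound still follows.
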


\begin{proof}
For $L_1\geq L$ the truncation $g_{L_1}$ is a piecewise constant function with $2^{L_1+1}$ pieces.
It can therefore be realized exactly by a shallow Heaviside network $\Phi_N$ of width $N=2^{L_1+1}-1$.
For an arbitrary $N\in\nn$ we now choose $\underline{L}_1=\lfloor\ld(N+1)\rfloor-1$ and by using \cref{thm:truncation_error}, we get
\begin{align}
    \norm{g-g_{L_1}}[L^{\infty}([0,1])]
    =
    \norm{g-\Phi_N}[L^{\infty}([0,1])]
    =
    2^{-\underline{L}_1-1}
    \leq
    2^{-\ld(N+1)+1}
    =\frac{2}{N+1}.
\end{align}
Furthermore, for every $i\in [M]$, $\walsh{L}{i}(g_{L-1})$ is a piecewise constant function with $2^L$ pieces and can therefore be realized exactly by a shallow Heaviside network of width $2^L\leq 2M$.

Performing an error decomposition and using the result from \cref{thm:product} for $S\in\nn$, we get\footnote{Note that the factors in the approximation of the product do not adhere to the domain $[0,1]$ but are in $[-1,1]$.
    As the approximation of the product is realized by a neural network, we can simply modify the affine transformation at the input and the output to adapt to the proper domain.}
    \begin{align}
        &\kern-2em\norm{(g-g_{L-1}) \cdot \walsh{L}{i}(g_{L-1})
        - \hat{\times}_S(\Phi_N-g_{L-1},  \walsh{L}{i}(g_{L-1}))}[L^\infty([0,1])]  \\
        &= \big\|
            (g-g_{L-1}) \cdot \walsh{L}{i}(g_{L-1}) 
            -
            (\Phi_N-g_{L-1}) \cdot \walsh{L}{i}(g_{L-1})
            \\&\phantom{{}={}}
            + 
            (\Phi_N-g_{L-1}) \cdot \walsh{L}{i}(g_{L-1})
            -
            \hat{\times}_S(\Phi_N-g_{L-1},  \walsh{L}{i}(g_{L-1}))
            \big\|_{L^\infty([0,1])} \\
        &\leq \norm{
            (g-\Phi_N) \cdot \walsh{L}{i}(g_{L-1})
            -
            (\Phi_N-g_{L-1}) \cdot \walsh{L}{i}(g_{L-1})
            }[L^\infty([0,1])] 
        \\&\phantom{{}={}}
        +
        \norm{
            (\Phi_N-g_{L-1}) \cdot \walsh{L}{i}(g_{L-1})
            -
            \hat{\times}_S(\Phi_N-g_{L-1},  \walsh{L}{i}(g_{L-1}))}[L^\infty([0,1])] \\
        &\lesssim N^{-1} + S2^{-S},
    \end{align}
    where the hidden constant is independent of $M$ and $i$, however, as this approximates $h_i/(\sqrt{3}2^L)$, the final error in the approximation of $h_i$ is bounded by $M(N^{-1}+S2^{-S})$.
    
    Finally, we require one shallow network with $N$ neurons to realize $\Phi_N$, one shallow network with $2^L\leq 2M$ neurons for the realization of each $\walsh{L}{i}(g_{L-1})$, and $M$ realizations of the product network of width $12$ and depth $\mathcal{O}(S^2)$
    in order to achieve an overall error of $\mathcal{O}(M(N^{-1}+S2^{-S}))$.
\end{proof}

\begin{corollary}[Parametric Separation for Quantized Networks]
\label[corollary]{thm:network_gap}
Let $M, L, \mathcal{F}$ be as in \cref{thm:heaviside_joint} and let $N \in \nn$. For $i \in [M]$, let $\Phi_i$ be any family of neural networks of arbitrary architecture and activation function, with $W$ weights each quantized to $w$ bits, where $wW\le N/M.$ Then 
\begin{align}
    \sup_{h \in \mathcal{F}} \max_{i \in [M]}
    \min_{\phi_i\in\Phi_i}
    \norm{h_i - \phi_i}_{L^\infty([0,1])}
     \geq \frac{2^{L-1}\sqrt{3}}{N/M+2^L}
     \geq \frac{M\sqrt{3}/2}{N/M+2M},
\end{align}
whereas the joint architecture of \cref{thm:heaviside_joint}, quantizable in at most $N$ bits, achieves
\begin{align}
    \sup_{h \in \mathcal{F}}
    \norm{h - \Phi(h)}_{L^\infty([0,1];\rr[M])}
    \lesssim
    \frac{M}{N}.
\end{align}
In particular, the multitask gap of \cref{thm:approximation_gap} is realized by neural networks of matched description length.
\end{corollary}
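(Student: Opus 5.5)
The corollary has two halves. I would reduce the lower bound to \cref{thm:wavelet_separated}, and get the upper bound from \cref{thm:heaviside_joint} by counting the bits needed to store the adapted parameters.

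\textbf{Lower bound.} Fix $i\in[M]$. Any family $\Phi_i$ whose $W$ weights are each quantized to $w$ bits has at most $2^{wW}\le 2^{N/M}$ distinct realizations, regardless of architecture or activation. The map $h_i\mapsto\operatorname{argmin}_{\phi_i\in\Phi_i}\|h_i-\phi_i\|_{L^\infty}$ is therefore a discretization map whose range has cardinality at most $2^{\lfloor N/M\rfloor}$. If the count is not a power of two, I pad the range with redundant points. Choosing each coordinate independently of the others gives a separate codec in $\sepClass[n]$ with $n_i=\lfloor N/M\rfloor$. This holds because the network for coordinate $i$ only ever sees $h_i$.

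Hence the quantity $\sup_h\max_i\min_{\phi_i}$ is bounded below by the separate minimax distortion. By \cref{thm:wavelet_separated}, applied with $\min_i n_i\le N/M$ and monotonicity in the bit count, this is at least $\frac{2^{L-1}\sqrt3}{N/M+2^L}$. The second inequality uses $M\le 2^L<2M$: the numerator satisfies $2^{L-1}\ge M/2$, and the denominator satisfies $N/M+2^L\le N/M+2M$.

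\textbf{Upper bound.} In \cref{thm:heaviside_joint}, only the parameters of $\Phi^{(I)}$ depend on $h$, while $\Phi^{(O)}$ is fixed and costs no bits. I take the proof's construction with $S\approx\ld N$, so that $M S2^{-S}\lesssim M/N$. Then I check that the adapted parameters can be described with $O(N)$ bits:
\begin{itemize}
\item The shallow network $\Phi_N$ realizing $g_{\underline L_1}$ has dyadic breakpoints that are fixed. Only its output weights vary, and each is determined by a sign $\coeff[l][m]$. This uses about $2^{\underline L_1+1}\le N+1$ bits, exactly as in the truncation codec of \cref{thm:optimal_codec}.
\item The $M$ shallow networks realizing $\walsh{L}{i}(g_{L-1})$ are determined by $g_{L-1}$, which requires $2^L-1$ bits.
\end{itemize}
Rescaling $N$ by a constant to absorb these overheads gives the error $\lesssim M/N$ within $N$ bits. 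Comparing the two halves shows the $M/4$-type gap at matched description length.

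\textbf{Main obstacle.} The step I expect to be hardest is justifying that the joint Heaviside network is \emph{quantizable in at most $N$ bits}. The width-$2M^2$ part of $\Phi^{(I)}$ and the real-valued weights must be shown to be either fixed or determined by the coefficient bits. My plan is to parameterize every varying weight as an explicit function of the finitely many signs $\coeff[l][m]$ with $l\le\underline L_1$, rather than quantizing arbitrary real weights.
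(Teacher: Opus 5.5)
Your proposal follows the paper's route. A counting argument makes each quantized family $\Phi_i$ a $wW$-bit discretization map, so \cref{thm:wavelet_separated} gives the lower bound. The upper bound is \cref{thm:heaviside_joint} with $\Phi^{(O)}$ fixed and only the input network depending on $h$. Your check of the second inequality via $M\le 2^L<2M$ and your padding remark supply details the paper leaves implicit.

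The choice $S\approx\ld N$ is too small. With $S=\lceil\ld N\rceil$ you only get $S2^{-S}\le\lceil\ld N\rceil/N$, so the error from \cref{thm:heaviside_joint} is $M\ld(N)/N$, a logarithmic factor above the claimed $M/N$. You need $S$ larger by a constant factor, for instance $S=2\lceil\ld N\rceil$ as the paper uses, which gives $S2^{-S}\le 2\lceil\ld N\rceil/N^2\lesssim 1/N$. This costs no bits, because only the depth of the fixed network $\Phi^{(O)}$ grows.

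Separately, you do not need the extra $2^L-1$ bits for $g_{L-1}$, nor the rescaling of $N$. Under $N\ge 2^{L+1}-1$ you have $\underline L_1=\lfloor\ld(N+1)\rfloor-1\ge L$, so $g_{L-1}$ is a truncation of $g_{\underline L_1}$. The Walsh subnetworks, which make up the width-$2M^2$ part you flagged as the main obstacle, are therefore functions of the same $2^{\underline L_1+1}-1\le N$ signs that determine $\Phi_N$. So the whole input network is quantizable in exactly that many bits, which is the paper's one-line argument.
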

\begin{proof}
    A fixed architecture with $W$ weights, each quantized to $w$ bits, realizes at most $2^{wW}$ distinct functions and is therefore a $B$-bit discretization map with $B = wW \le N/M$ in the sense of \cref{def:disc_maps}.
    The lower bound is then given by \cref{thm:wavelet_separated}.
    The upper bound is \cref{thm:heaviside_joint} with $S=2\lceil\ld(N)\rceil$.
    Furthermore, we observe on the one hand, that the output network $\Phi^{(O)}$ has fixed weights, therefore no dependence on the function $h$.
    On the other hand, the input network encodes the truncation $g_{L_1}$ for $L_1=\lfloor \ld(N+1)\rfloor-1$, thus, the weights are quantizable to $2^{L_1+1}-1\leq N$ bits.
\end{proof}
For real weights the counting argument breaks down entirely, as a single real parameter already carries infinitely many bits, so we restrict to encodable networks, as is standard in the rate-distortion analysis of neural networks \citep{Bolcskei19OptimalApproximationSparsely, elbrachter2021deep}.

%%%%%%%%%%%%%%%%%%%%%%%%%%%%%%%%%%%%%%%%%%%%%%%%%%
%%%%%%%%%%%%%%%%%%%%%%%%%%%%%%%%%%%%%%%%%%%%%%%%%%
%%%%%%%%%%%%%%%%%%%%%%%%%%%%%%%%%%%%%%%%%%%%%%%%%%
\section{Conclusion}

We have shown that joint approximation retains its parametric advantage over separate approximation even for orthogonality-constrained multi-head models. For the compositional class
\[
F_g=(f_0\circ g,\ldots,f_{M-1}\circ g),
\]
the optimal joint codec, using a no larger total bit budget, improves over the best separate codec by an asymptotic factor of $M/4$ in the uniform norm. This gap persists under orthogonality of the coordinates and on the level of finite-bit approximation rates, leaving shared latent approximation complexity as the main source of the gain.

Our construction makes this source explicit. The coefficients of the Rademacher-Haar feature $g$ form a dyadic tree whose unresolved tail fixes the approximation rate, and the truncation estimates give matching finite-bit bounds with explicit constants. The Sawtooth-Walsh maps turn this feature into $M$ orthogonal readouts, so that orthogonality is imposed on the outputs while the hard information remains shared. Thus, each marginal problem contains the same unresolved tail, whereas the vector-valued problem contains it only once. The resulting $M/4$ gap is thus a quantitative rate separation, obtained through a fully constructive comparison of the optimal finite-bit rates. In this sense, our main result gives an approximation-theoretic version of the principle behind shared representations, namely that the dominant cost lies in the latent feature rather than the heads.

The Heaviside construction translates the codec separation into a concrete neural network realization. The relevant truncations of $g$ and the Walsh factors are piecewise constant and, therefore, are naturally represented by the Heaviside subnetworks. Through the triangle-wave construction, the network realizes the product structure. The architecture realizes the same bit-economy as the codec, with a shared trunk for the Rademacher-Haar feature and heads that impose the Sawtooth-Walsh readouts.

Viewed from the perspective of foundation models, the construction supports a simple design principle. Structural requirements may be imposed in the heads without forcing the latent representation to be learned or encoded separately for each output. This points to a more central question: which structural constraints can be carried by the readouts, and which ones force additional complexity into the shared representation? The open problem is to identify when this mechanism persists for broader classes of neural networks, including smoother features, operator-valued maps, learned latent representations, and constraints arising from equivariance, conservation laws or boundary conditions. Within any such setting, one would want to identify the function classes that play the role of the Rademacher-Haar wavelet series, whose coefficients carry equal information at every scale, for those we expect the same factor-$M$ separation.

\bibliography{references}
\end{document}